\newcommand{\pol}[0]{\pmb{\pi}}
\newcommand{\cpol}[0]{\pmb{\mu}}
\newcommand{\dd}{\frac{\partial}{\partial \theta_i}}
\newcommand{\insertfigure}[3]{
\begin{figure}[ht]
\centering
\includegraphics[width=#2\textwidth]{fig/#1}
\caption{#3}
\label{fig:#1}
\end{figure}
}
\newcommand{\hide}[1]{}
\newtheorem{prop}{Proposition}
\title{Multi-Agent Actor-Critic for Mixed Cooperative-Competitive Environments} 
\author{Ryan Lowe$^*$\\ McGill University\\ OpenAI \And \hspace{-4mm}Yi Wu\thanks{Equal contribution.}\\ \hspace{-4mm}UC Berkeley \And Aviv Tamar\\ UC Berkeley \AND Jean Harb\\McGill University\\ OpenAI \And Pieter Abbeel\\UC Berkeley\\ OpenAI \And Igor Mordatch\\ OpenAI
}
\begin{document}

\maketitle

\begin{abstract}
We explore deep reinforcement learning methods for multi-agent domains. We begin by analyzing the difficulty of traditional algorithms in the multi-agent case: Q-learning is challenged by an inherent non-stationarity of the environment, while policy gradient suffers from a variance that increases as the number of agents grows. 
We then present an adaptation of actor-critic methods that considers action policies of other agents and is able to successfully learn policies that require complex multi-agent coordination. Additionally, we introduce a training regimen utilizing an ensemble of policies for each agent that leads to more robust multi-agent policies. We show the strength of our approach compared to existing methods in cooperative as well as competitive scenarios, where agent populations are able to discover various physical and informational coordination strategies.



\end{abstract}

\section{Introduction}
Reinforcement learning (RL) has recently been applied to solve challenging problems, from game playing \cite{mnih2015human,alphago} to robotics \cite{levine2015end}. In industrial applications, RL is emerging as a practical component in large scale systems such as data center cooling \cite{googleblog}. 
Most of the successes of RL have been in single agent domains, where modelling or predicting the behaviour of other actors in the environment is largely unnecessary.

However, there are a number of important applications that involve interaction between multiple agents, where emergent behavior and complexity arise from agents co-evolving together. For example, multi-robot control \cite{matignon12coordinated}, the discovery of communication and language \cite{sukhbaatar2016learning,foerster16b,mordatch2017emergence}, multiplayer games \cite{peng17starcraft}, and the analysis of social dilemmas \cite{multiagent_ssd} all operate in a multi-agent domain. Related problems, such as variants of hierarchical reinforcement learning \cite{dayan93feudal} can also be seen as a multi-agent system, with multiple levels of hierarchy being equivalent to multiple agents. Additionally, multi-agent self-play has recently been shown to be a useful training paradigm \cite{alphago, sukhbaatar2017intrinsic}. 
Successfully scaling RL to environments with multiple agents is crucial to building artificially intelligent systems that can productively interact with humans and each other.



Unfortunately, traditional reinforcement learning approaches such as Q-Learning or policy gradient are poorly suited to multi-agent environments. One issue is that each agent’s policy is changing as training progresses, and the environment becomes non-stationary from the perspective of any individual agent (in a way that is not explainable by changes in the agent's own policy). This presents learning stability challenges and prevents the straightforward use of past experience replay, which is crucial for stabilizing deep Q-learning. Policy gradient methods, on the other hand, usually exhibit very high variance when coordination of multiple agents is required. Alternatively, one can use model-based policy optimization which can learn optimal policies via back-propagation, but this requires a (differentiable) model of the world dynamics and assumptions about the interactions between agents. 
Applying these methods to competitive environments is also challenging from an optimization perspective, as evidenced by the notorious instability of adversarial training methods~\cite{goodfellow2014generative}.

In this work, we propose a general-purpose multi-agent learning algorithm that:
(1) leads to learned policies that only use local information (i.e.\@ their own observations) at execution time, (2) does not assume a differentiable model of the environment dynamics or any particular structure on the communication method between agents, and (3) is applicable not only to cooperative interaction but to competitive or mixed interaction involving both physical and communicative behavior. The ability to act in mixed cooperative-competitive environments may be critical for intelligent agents; while competitive training provides a natural curriculum for learning~\cite{sukhbaatar2017intrinsic}, agents must also exhibit cooperative behavior (e.g.\@ with humans) at execution time.

We adopt the framework of centralized training with decentralized execution, allowing the policies to use extra information to ease training, so long as this information is not used at test time. It is unnatural to do this with Q-learning without making additional assumptions about the structure of the environment, as the Q function generally cannot contain different information at training and test time. Thus, we propose a simple extension of actor-critic policy gradient methods where the critic is augmented with extra information about the policies of other agents, while the actor only has access to local information. After training is completed, only the local actors are used at execution phase, acting in a decentralized manner and equally applicable in cooperative and competitive settings.

Since the centralized critic function explicitly uses the decision-making policies of other agents, we additionally show that agents can learn approximate models of other agents online and effectively use them in their own policy learning procedure. We also introduce a method to improve the stability of multi-agent policies by training agents with an ensemble of policies, thus requiring robust interaction with a variety of collaborator and competitor policies. 
We empirically show the success of our approach compared to existing methods in cooperative as well as competitive scenarios, where agent populations are able to discover complex physical and communicative coordination strategies.

\section{Related Work}
\vspace{-2mm}

The simplest approach to learning in multi-agent settings is to use independently learning agents. This was attempted with Q-learning in \cite{tan93multi}, but does not perform well in practice \cite{matignon12independent}. As we will show, independently-learning policy gradient methods also perform poorly. 
One issue is that each agent's policy changes during training, resulting in a non-stationary environment and preventing the na{\"i}ve application of experience replay. 
Previous work has attempted to address this by inputting other agent’s policy parameters to the Q function \cite{hyper_q}, explicitly adding the iteration index to the replay buffer, or using importance sampling \cite{foerster_nonstat}. Deep Q-learning approaches have previously been investigated in \cite{tampuu2017multiagent} to train competing Pong agents.

The nature of interaction between agents can either be cooperative, competitive, or both and many algorithms are designed only for a particular nature of interaction. Most studied are cooperative settings, with strategies such as optimistic and hysteretic Q function updates \cite{lauer00distributed,hyst07,hyst17}, which assume that the actions of other agents are made to improve collective reward. Another approach is to indirectly arrive at cooperation via sharing of policy parameters \cite{gupta17cooperative}, but this requires homogeneous agent capabilities. These algorithms are generally not applicable in competitive or mixed settings. See \cite{panait05,busoniu2008comprehensive} for surveys of multi-agent learning approaches and applications.

Concurrently to our work, \cite{foerster2017counterfactual} proposed a similar idea of using policy gradient methods with a centralized critic, and test their approach on a StarCraft micromanagement task. Their approach differs from ours in the following ways: (1) they learn a single centralized critic for all agents, whereas we learn a centralized critic for each agent, allowing for agents with differing reward functions including competitive scenarios, (2) we consider environments with explicit communication between agents, (3) they combine recurrent policies with feed-forward critics, whereas our experiments use feed-forward policies (although our methods are applicable to recurrent policies), (4) we learn continuous policies whereas they learn discrete policies.  

\nocite{thomas2011conjugate}

Recent work has focused on learning grounded cooperative communication protocols between agents to solve various tasks \cite{sukhbaatar2016learning,foerster16b,mordatch2017emergence}. However, these methods are usually only applicable when the communication between agents is carried out over a dedicated, differentiable communication channel. 



Our method requires explicitly modeling decision-making process of other agents. The importance of such modeling has been recognized by both reinforcement learning \cite{boutilier96,boutilier03} and cognitive science communities \cite{frank_rsa}. \cite{hu98} stressed the importance of being robust to the decision making process of other agents, as do others by building Bayesian models of decision making. We incorporate such robustness considerations by requiring that agents interact successfully with an ensemble of any possible policies of other agents, improving training stability and robustness of agents after training.

\section{Background}
\label{sec:background}

\paragraph{Markov Games}
In this work, we consider a multi-agent extension of Markov decision processes (MDPs) called partially observable Markov games \cite{littman1994markov}. A Markov game for $N$ agents is defined by a set of states $\mathcal{S}$ describing the possible configurations of all agents, a set of actions $\mathcal{A}_1,...,\mathcal{A}_N$ and a set of observations $\mathcal{O}_1,...,\mathcal{O}_N$ for each agent. To choose actions, each agent $i$ uses a stochastic policy $\pol_{\theta_i} : \mathcal{O}_i \times \mathcal{A}_i \mapsto [0,1]$, which produces the next state according to the state transition function $\mathcal{T} : \mathcal{S} \times \mathcal{A}_1 \times ... \times \mathcal{A}_N \mapsto \mathcal{S}$.\footnote{To minimize notation we will often omit $\theta$ from the subscript of $\pol$.} 
Each agent $i$ obtains rewards as a function of the state and agent's action $r_i : \mathcal{S} \times \mathcal{A}_i \mapsto \mathbb{R}$, and receives a private observation correlated with the state $\mathbf{o}_i : \mathcal{S} \mapsto \mathcal{O}_i$. The initial states are determined by a distribution $\rho : \mathcal{S} \mapsto [0,1]$. Each agent $i$ aims to maximize its own total expected return $R_i = \sum_{t=0}^T \gamma^t r^t_i$ where $\gamma$ is a discount factor and $T$ is the time horizon.


\paragraph{Q-Learning and Deep Q-Networks (DQN).}
Q-Learning and DQN \cite{mnih2015human} are popular methods in reinforcement learning and have been previously applied to multi-agent settings \cite{foerster16b,hyper_q}. Q-Learning makes use of an action-value function for policy $\pol$ as $Q^{\pol}(s, a) = \mathbb{E}[R | s^t = s, a^t = a]$.
This Q function can be recursively rewritten as $Q^{\pol}(s, a) = \mathbb{E}_{s'}[r(s,a) + \gamma \mathbb{E}_{a' \sim \pol}[Q^{\pol}(s', a')]]$. DQN learns the action-value function $Q^*$ corresponding to the optimal policy by minimizing the loss:
\begin{equation}
    \mathcal{L}(\theta) = \mathbb{E}_{s,a,r,s'}[(Q^*(s,a|\theta) - y)^2], \qquad
    \text{where~ } \quad y = r + \gamma \max_{a'} \bar{Q}^*(s', a'),
\end{equation}
where $\bar{Q}$ is a target Q function whose parameters are periodically updated with the most recent $\theta$, which helps stabilize learning. Another crucial component of stabilizing DQN is the use of an experience replay buffer $\mathcal{D}$ containing tuples $(s,a,r,s')$.

Q-Learning can be directly applied to multi-agent settings by having each agent $i$ learn an independently optimal function $Q_i$ \cite{tan93multi}. However, because agents are independently updating their policies as learning progresses, the environment appears non-stationary from the view of any one agent, violating Markov assumptions required for convergence of Q-learning. Another difficulty observed in \cite{foerster_nonstat} is that the experience replay buffer cannot be used in such a setting since in general, $P(s'|s,a,\pol_1,...,\pol_N) \neq P(s'|s,a,\pol'_1,...,\pol'_N)$ when any $\pol_i \neq \pol_i'$.

\paragraph{Policy Gradient (PG) Algorithms.} Policy gradient methods are another popular choice for a variety of RL tasks.
The main idea is to directly adjust the parameters $\theta$ of the policy in order to maximize the objective $J(\theta) = \mathbb{E}_{s \sim p^{\pol}, a \sim {\pol}_\theta}[R]$ by taking steps in the direction of $\nabla_\theta J(\theta)$. Using the Q function defined previously, the gradient of the policy can be written as \cite{sutton2000policy}:
\begin{equation}
\nabla_\theta J(\theta) = \mathbb{E}_{s \sim p^{\pol}, a \sim {\pol}_\theta} [\nabla_\theta \log \pol_\theta(a|s) Q^{\pol} (s,a)],
\end{equation}
where $p^{\pol}$ is the state distribution. 
The policy gradient theorem has given rise to several practical algorithms, which often differ in how they estimate $Q^{\pol}$. For example, one can simply use a sample return $R^t = \sum_{i=t}^T \gamma^{i-t}r_i$, which leads to the REINFORCE algorithm \cite{williams1992simple}. 
Alternatively, one could learn an approximation of the true action-value function $Q^{\pol}(s,a)$ by e.g.\@ temporal-difference learning \cite{sutton1998reinforcement}; 
this $Q^{\pol}(s,a)$ is called the \textit{critic} and leads to a variety of \textit{actor-critic} algorithms \cite{sutton1998reinforcement}.

Policy gradient methods are known to exhibit high variance gradient estimates.  This is exacerbated in multi-agent settings; since an agent's reward usually depends on the actions of many agents, the reward conditioned only on the agent's own actions (when the actions of other agents are not considered in the agent's optimization process) exhibits much more variability, thereby increasing the variance of its gradients.
Below, we show a simple setting where the probability of taking a gradient step in the correct direction decreases exponentially with the number of agents. 
\begin{prop}\label{prop:pg}
Consider $N$ agents with binary actions: $P(a_i=1) = \theta_i$, where $R(a_1,\dots,a_N) = \mathbf{1}_{a_1=\dots=a_N}$. We assume an uninformed scenario, in which agents are initialized to $\theta_i=0.5 \ \forall i$. Then, if we are estimating the gradient of the cost $J$ with policy gradient, we have:

$$ P(\langle \hat{\nabla} J, \nabla J \rangle > 0) \propto (0.5)^N
$$

where $\hat{\nabla} J$ is the policy gradient estimator from a single sample, and $\nabla J$ is the true gradient.
\end{prop}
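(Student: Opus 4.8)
The plan is to reduce everything to an elementary calculation once the objective and the true gradient are written down explicitly. Since this is effectively a stateless one-shot game, I would first write $J(\theta) = \mathbb{E}[R] = \prod_{i=1}^N \theta_i + \prod_{i=1}^N (1-\theta_i)$, the probability that all $N$ binary actions coincide. Differentiating gives $\partial J/\partial \theta_i = \prod_{j\neq i}\theta_j - \prod_{j\neq i}(1-\theta_j)$, which vanishes exactly at the uninformed point $\theta=(\tfrac12,\dots,\tfrac12)$, so the first thing to settle is what ``the true gradient $\nabla J$'' should mean there. The reading I would adopt is the limiting direction of $\nabla J(\theta)$ as $\theta\to(\tfrac12,\dots,\tfrac12)$ along $\theta_i=\tfrac12+\varepsilon$: for every $\varepsilon>0$ each coordinate of $\nabla J$ is strictly positive, so $\nabla J$ is proportional to the all-ones vector $\mathbf{1}$ (equivalently, it points toward the optimum at $\theta=(1,\dots,1)$); the case $\varepsilon<0$ gives $-\mathbf{1}$ and is symmetric. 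Hence it suffices to compute $P(\langle \hat\nabla J,\mathbf{1}\rangle>0)$.

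Next I would write out the single-sample REINFORCE estimator. Because the joint policy factorizes, $\pol_\theta(a_1,\dots,a_N)=\prod_i \pol_{\theta_i}(a_i)$, the $i$-th component of the estimate from one draw $(a_1,\dots,a_N)\sim\pol_\theta$ is $\hat\nabla_{\theta_i}J = R(a_1,\dots,a_N)\,\partial_{\theta_i}\log\pol_{\theta_i}(a_i)$, where at $\theta_i=\tfrac12$ we have $\partial_{\theta_i}\log\pol_{\theta_i}(1)=1/\theta_i=2$ and $\partial_{\theta_i}\log\pol_{\theta_i}(0)=-1/(1-\theta_i)=-2$. The key observation is that $R=1$ only on the event $\{a_1=\dots=a_N\}$, which has probability $2\cdot(0.5)^N=(0.5)^{N-1}$; on its complement $R=0$, so $\hat\nabla J$ is the zero vector and $\langle\hat\nabla J,\mathbf{1}\rangle=0\not>0$.

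Then the favorable event splits cleanly: conditioned on $R=1$, either all $a_i=1$ (probability $(0.5)^N$), in which case every coordinate of $\hat\nabla J$ is $2$ and $\langle\hat\nabla J,\mathbf{1}\rangle=2N>0$; or all $a_i=0$ (probability $(0.5)^N$), in which case $\langle\hat\nabla J,\mathbf{1}\rangle=-2N<0$. Therefore $P(\langle\hat\nabla J,\nabla J\rangle>0)=(0.5)^N$, which is the claim (in fact with equality). I expect the only genuine subtlety to be the first step: making precise what ``the true gradient'' means at a saddle point of $J$, and framing the statement as being about whether a one-sample update moves $\theta$ toward the correct optimum. A secondary point I would flag is that this uses vanilla REINFORCE with \emph{no} baseline; a state-independent baseline would make $\hat\nabla J$ nonzero on the $R=0$ event and alter the bookkeeping, so the no-baseline assumption should be stated explicitly.
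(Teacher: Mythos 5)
Your proof is correct and the core computation is the same as the paper's: write down the single-sample REINFORCE estimator at $\theta_i=\tfrac12$, observe it is nonzero only on the event $\{a_1=\dots=a_N\}$, and identify the all-ones outcome (probability $(0.5)^N$) as the only one giving a positive inner product. The one place you genuinely diverge is in handling the degeneracy you correctly flag: with the symmetric reward $R=\mathbf{1}_{a_1=\dots=a_N}$ the true gradient of $J=\prod_i\theta_i+\prod_i(1-\theta_i)$ vanishes at the uninformed point, so the statement needs an interpretation. You resolve this by taking $\nabla J$ to be the limiting direction $\mathbf{1}$, whereas the paper resolves it by quietly specializing mid-proof to the asymmetric reward $R=\mathbf{1}_{a_1=\dots=a_N=1}$, under which $\partial J/\partial\theta_i=(0.5)^{N-1}>0$ and the inner product is literally well-defined; both routes land on the same event and the same probability $(0.5)^N$. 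Your version has the advantage of staying faithful to the reward actually stated in the proposition and of making the needed convention explicit rather than implicit; the paper's version avoids any limiting argument at the cost of changing the reward. Your remark about baselines is apt but outside the scope of the claim. (Minor note: your score-function values $\pm 2$ are the correct ones; the paper's $2a_i-1$ drops a factor of $2$, which is immaterial to the sign and hence to the result.)
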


\begin{proof}
See Appendix.
\end{proof}
The use of baselines, such as value function baselines typically used to ameliorate high variance, is problematic in multi-agent settings due to the non-stationarity issues mentioned previously.

\paragraph{Deterministic Policy Gradient (DPG) Algorithms.} It is also possible to extend the policy gradient framework to deterministic policies $\cpol_\theta: \mathcal{S} \mapsto \mathcal{A}$ \cite{silver2014deterministic}. In particular, under certain conditions we can write the gradient of the objective $J(\theta) = \mathbb{E}_{s \sim p^{\cpol}}[R(s,a)]$ as:
\begin{equation}
\nabla_\theta J(\theta) = \mathbb{E}_{s \sim \mathcal{D}} [\nabla_\theta \cpol_\theta(a|s) \nabla_a Q^{\cpol} (s,a)|_{a=\cpol_\theta (s)}]
\end{equation}
Since this theorem relies on $\nabla_a Q^{\cpol} (s,a)$, it requires that the action space $\mathcal{A}$ (and thus the policy $\cpol$) be continuous. 

\textit{Deep deterministic policy gradient} (DDPG) \cite{lillicrap2015continuous} is a variant of DPG where the policy $\cpol$ and critic $Q^{\cpol}$ are approximated with deep neural networks. DDPG is an off-policy algorithm, and samples trajectories from a replay buffer of experiences that are stored throughout training. DDPG also makes use of a target network, as in DQN \cite{mnih2015human}.



\section{Methods}
\label{sec:methods}
\subsection{Multi-Agent Actor Critic}\label{sec:maac}

\begin{wrapfigure}{r}{0.45\textwidth}
\vspace{-20mm}
\includegraphics[width=.9\linewidth]{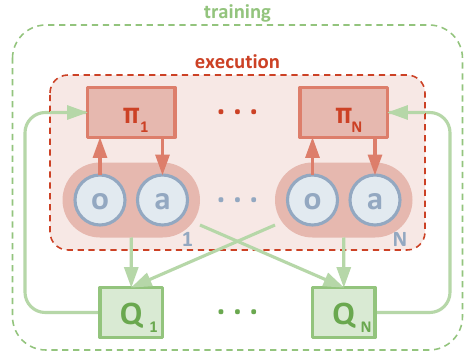}
\caption{\label{fig:model} Overview of our multi-agent decentralized actor, centralized critic approach.\vspace{-2mm}}
\end{wrapfigure}

We have argued in the previous section 
that na{\"i}ve policy gradient methods perform poorly in simple multi-agent settings, and this is supported in our experiments in Section \ref{sec:experiments}. Our goal in this section is to derive an algorithm that works well 
in such settings. However, we would like to operate under the following constraints: (1) the learned policies can only use local information (i.e.\@ their own observations) at execution time, (2) we do not assume a differentiable model of the environment dynamics, unlike in \cite{mordatch2017emergence}, 
and (3) we do not assume any particular structure on the communication method between agents (that is, we don't assume a differentiable communication channel). Fulfilling the above desiderata would provide a general-purpose multi-agent learning algorithm that could be applied not just to cooperative games with explicit communication channels, but competitive games and games involving only physical interactions between agents.


Similarly to \cite{foerster16b}, we accomplish our goal by adopting the framework of centralized training with decentralized execution. Thus, we allow the policies to use extra information to ease training, so long as this information is not used at test time. It is unnatural to do this with Q-learning, as the Q function generally cannot contain different information at training and test time. Thus, we propose a simple extension of actor-critic policy gradient methods where the critic is augmented with extra information about the policies of other agents. 

More concretely, consider a game with $N$ agents with policies parameterized by $\pmb{\theta} = \{\theta_1, ..., \theta_N\}$, and let $\pol = \{\pol_1, ..., \pol_N\}$ be the set of all agent policies. Then we can write the gradient of the expected return for agent $i$, $J(\theta_i) = \mathbb{E}[R_i]$ as:
\begin{equation}
\nabla_{\theta_i} J(\theta_i) = \mathbb{E}_{s\sim p^{\cpol},a_i \sim \pol_i} [\nabla_{\theta_i} \log \pol_i(a_i|o_i) Q^{\pol}_i (\mathbf{x},a_1, ..., a_N)]. 
\end{equation}

Here $Q^{\pol}_i (\mathbf{x},a_1, ..., a_N)$ is a \textit{centralized action-value function} that takes as input the actions of all agents, $a_1,\ldots, a_N$, in addition to some state information $\mathbf{x}$, and outputs the Q-value for agent $i$. In the simplest case, $\mathbf{x}$ could consist of the observations of all agents, $\mathbf{x} = (o_1, ..., o_N)$, however we could also include additional state information if available. Since each $Q^{\pol}_i$ is learned separately, agents can have arbitrary reward structures, including conflicting rewards in a competitive setting.

We can extend the above idea to work with deterministic policies. If we now consider $N$ continuous policies $\cpol_{\theta_i}$ w.r.t. parameters $\theta_i$ (abbreviated as $\cpol_i$), the gradient can be written as:
\begin{equation}
\nabla_{\theta_i} J(\cpol_i) = \mathbb{E}_{\mathbf{x},a \sim \mathcal{D}}[\nabla_{\theta_i} \cpol_i(a_i|o_i) \nabla_{a_i} Q^{\cpol}_i (\mathbf{x},a_1, ..., a_N)|_{a_i=\cpol_i (o_i)}],
\end{equation}
Here the experience replay buffer $\mathcal{D}$ contains the tuples $(\mathbf{x},\mathbf{x}',a_1,\ldots,a_N,r_1,\ldots,r_N)$, recording experiences of all agents. 
The centralized action-value function $Q^{\cpol}_i$ is updated as:
\begin{equation}\label{eq:q_func}
\mathcal{L}(\theta_i) = \mathbb{E}_{\mathbf{x},a,r,\mathbf{x}'}[(Q^{\cpol}_i(\mathbf{x},a_1,\ldots,a_N) - y)^2], \;\;\;\;
y = r_i + \gamma\, Q^{\cpol'}_i(\mathbf{x}', a_1',\ldots,a_N')\big|_{a_j'=\cpol'_j(o_j)},
\end{equation}
where $\cpol' = \{\cpol_{\theta'_1}, ..., \cpol_{\theta'_N} \}$ is the set of target policies with delayed parameters $\theta'_i$. As shown in Section \ref{sec:experiments}, we find the centralized critic with deterministic policies works very well in practice, and refer to it as \textit{multi-agent deep deterministic policy gradient} (MADDPG). 
We provide the description of the full algorithm in the Appendix. 

A primary motivation behind MADDPG is that, if we know the actions taken by all agents, the environment is stationary even as the policies change, since $P(s'|s,a_1,...,a_N,\pol_1,...,\pol_N) = P(s'|s,a_1,...,a_N) = P(s'|s,a_1,...,a_N,\pol'_1,...,\pol'_N)$ for any $\pol_i \neq \pol_i'$. This is not the case if we do not explicitly condition on the actions of other agents, as done for most traditional RL methods. 

Note that we require the policies of other agents to apply an update in Eq.~\ref{eq:q_func}. Knowing the observations and policies of other agents is not a particularly restrictive assumption; if our goal is to train agents to exhibit complex communicative behaviour in simulation, this information is often available to all agents.
However, we can relax this assumption if necessary by learning the policies of other agents from observations --- we describe a method of doing this in Section \ref{sec:modellearning}.

\subsection{Inferring Policies of Other Agents}
\label{sec:modellearning}
To remove the assumption of knowing other agents' policies, as required in Eq.~\ref{eq:q_func}, each agent $i$ can additionally maintain an approximation $\hat{\cpol}_{\phi_i^j}$ (where $\phi$ are the parameters of the approximation; henceforth $\hat{\cpol}_i^j$) to the true policy of agent $j$, $\cpol_j$. This approximate policy is learned by maximizing the log probability of agent $j$'s actions, with an entropy regularizer:
\begin{equation}\label{eq:loss-modellearn}
\mathcal{L}(\phi_i^j)=-\mathbb{E}_{o_j,a_j}\left[\log \hat{\cpol}_i^j(a_j|o_j) + \lambda H(\hat{\cpol}_i^j) \right],
\end{equation}
where $H$ is the entropy of the policy distribution. With the approximate policies, $y$ in Eq.~\ref{eq:q_func} can be replaced by an approximate value $\hat{y}$ calculated as follows: 
\begin{equation}\label{eq:approx_q_func}
\hat{y} = r_i + \gamma Q^{\cpol'}_i(\mathbf{x}', \hat{\cpol}'^1_i(o_1),\ldots,\cpol'_i(o_i),\ldots,\hat{\cpol}'^N_i(o_N)),
\end{equation}
where $\hat{\cpol}'^j_i$ denotes the target network for the approximate policy $\hat{\cpol}_i^j$.
Note that Eq.~\ref{eq:loss-modellearn} can be optimized in a completely online fashion: before updating  $Q^{\cpol}_i$, the centralized Q function, we take the latest samples of each agent $j$ from the replay buffer to perform a single gradient step to update $\phi_i^j$. 
Note also that, in the above equation, we input the action log probabilities of each agent directly into $Q$, rather than sampling.

\subsection{Agents with Policy Ensembles}
\label{sec:ensemble}
As previously mentioned, a recurring problem in multi-agent reinforcement learning is the environment non-stationarity due to the agents' changing policies. This is particularly true in competitive settings, where agents can derive a strong policy by overfitting to the behavior of their competitors. Such policies are undesirable as they are brittle and may fail when the competitors alter strategies.


To obtain multi-agent policies that are more robust to changes in the policy of competing agents, we propose to train a collection of $K$ different sub-policies. At each episode, we randomly select one particular sub-policy for each agent to execute. 
Suppose that policy $\cpol_i$ is an ensemble of $K$ different sub-policies with sub-policy $k$ denoted by $\cpol_{\theta_i^{(k)}}$ (denoted as $\cpol_i^{(k)}$). For agent $i$, we are then maximizing the ensemble objective:
$
J_e(\cpol_i)=\mathbb{E}_{k\sim\textrm{unif}(1,K),s\sim p^{\cpol},a\sim\cpol_i^{(k)}}\left[R_i(s,a)\right].
$

Since different sub-policies will be executed in different episodes, we maintain a replay buffer $\mathcal{D}_i^{(k)}$ for each sub-policy $\cpol_i^{(k)}$ of agent $i$. Accordingly, we can derive the gradient of the ensemble objective with respect to $\theta_i^{(k)}$ as follows:
\begin{equation}\label{eq:J_ensemble_grad}
\nabla_{\theta_i^{(k)}} J_e(\cpol_i)=\frac{1}{K}\mathbb{E}_{\mathbf{x},a\sim \mathcal{D}_i^{(k)}}\left[\nabla_{\theta_i^{(k)}} \cpol_i^{(k)}(a_i|o_i)\nabla_{a_i}Q^{\cpol_i}\left(\mathbf{x},a_1,\ldots,a_N\right)\Big|_{a_i=\cpol_i^{(k)}(o_i)}\right].
\end{equation}

\section{Experiments\protect\footnote[1]{
Videos of our experimental results can be viewed at \href{https://sites.google.com/site/multiagentac/}{https://sites.google.com/site/multiagentac/}}}
\label{sec:experiments}

\subsection{Environments}
\label{sec:environments}
To perform our experiments, we adopt the grounded communication environment proposed in \cite{mordatch2017emergence}\footnote{Code can be found here at https://github.com/openai/multiagent-particle-envs}, which consists of $N$ agents and $L$ landmarks inhabiting a two-dimensional world with continuous space and discrete time. Agents may take physical actions in the environment and communication actions that get broadcasted to other agents. Unlike  \cite{mordatch2017emergence}, we do not assume that all agents have identical action and observation spaces, or act according to the same policy $\pol$. We also consider games that are both cooperative (all agents must maximize a shared return) and competitive (agents have conflicting goals). Some environments require explicit communication between agents in order to achieve the best reward, while in other environments agents can only perform physical actions. We provide details for each environment below.

\hide{
\insertfigure{tasks}{1.00}{Illustrations of the experimental environment and some tasks we consider, including a) \emph{Cooperative Communication} b) \emph{Predator-Prey} c) \emph{Cooperative Navigation} d) \emph{Physical Deception}. See webpage for videos of all experimental results.}
}

\begin{figure}[ht]
\centering
\includegraphics[width=0.95\textwidth]{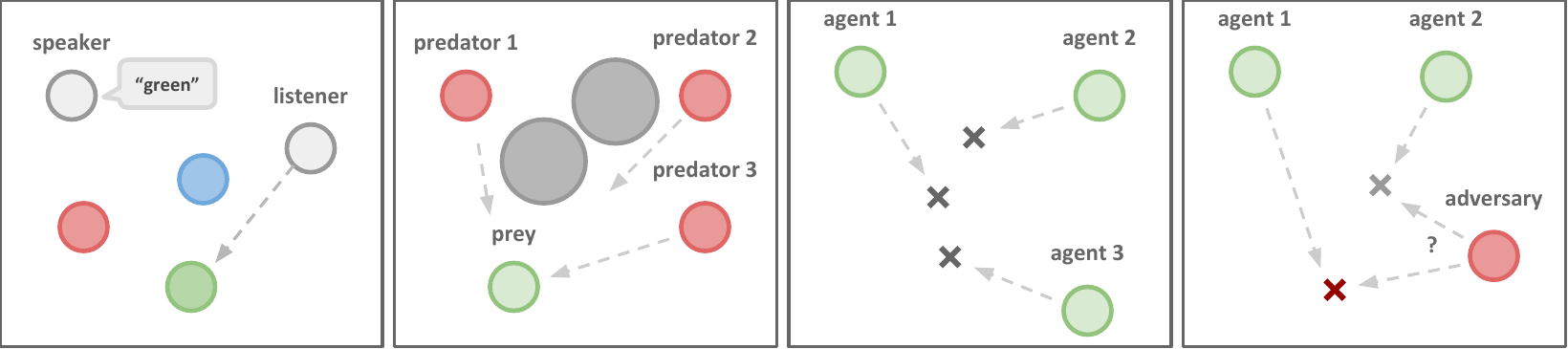}
\caption{Illustrations of the experimental environment and some tasks we consider, including a) \emph{Cooperative Communication} b) \emph{Predator-Prey} c) \emph{Cooperative Navigation} d) \emph{Physical Deception}. See webpage for videos of all experimental results.\vspace{-2mm}}
\label{fig:tasks}
\end{figure}



\textbf{Cooperative communication.} This task consists of two cooperative agents, a speaker and a listener, who are placed in an environment with three landmarks of differing colors. At each episode, the listener must navigate to a landmark of a particular color, and obtains reward based on its distance to the correct landmark. However, while the listener can observe the relative position and color of the landmarks, it does not know which landmark it must navigate to. Conversely, the speaker's observation consists of the correct landmark color, and it can produce a communication output at each time step which is observed by the listener. Thus, the speaker must learn to output the landmark colour based on the motions of the listener. 
Although this problem is relatively simple, as we show in Section \ref{sec:maddpgexperiments} it poses a significant challenge to traditional RL algorithms.

\textbf{Cooperative navigation.} In this environment, agents must cooperate through physical actions to reach a set of $L$ landmarks. Agents observe the relative positions of other agents and landmarks, and are collectively rewarded based on the proximity of any agent to each landmark. In other words, the agents have to `cover' all of the landmarks. Further, the agents occupy significant physical space and are penalized when colliding with each other. Our agents learn to infer the landmark they must cover, and move there while avoiding other agents. 

\textbf{Keep-away.} This scenario consists of $L$ landmarks including a target landmark, $N$ cooperating agents who know the target landmark and are rewarded based on their distance to the target, and $M$ \textit{adversarial} agents who must prevent the cooperating agents from reaching the target.  Adversaries accomplish this by physically pushing the agents away from the landmark, temporarily occupying it. While the adversaries are also rewarded based on their distance to the target landmark, they do not know the correct target; this must be inferred from the movements of the agents. 

\textbf{Physical deception.} Here, $N$ agents cooperate to reach a single target landmark from a total of $N$ landmarks. They are rewarded based on the minimum distance of any agent to the target (so only one agent needs to reach the target landmark). However, a lone adversary also desires to reach the target landmark; the catch is that the adversary does not know which of the landmarks is the correct one. Thus the cooperating agents, who are penalized based on the adversary distance to the target, learn to spread out and cover all landmarks so as to deceive the adversary.

\textbf{Predator-prey.} In this variant of the classic predator-prey game, $N$ slower cooperating agents must chase the faster adversary around a randomly generated environment with $L$ large landmarks impeding the way. Each time the cooperative agents collide with an adversary, the agents are rewarded while the adversary is penalized. Agents observe the relative positions and velocities of the agents, and the positions of the landmarks.

\textbf{Covert communication.} This is an adversarial communication environment, where a speaker agent (`Alice') must communicate a message to a listener agent (`Bob'), who must reconstruct the message at the other end. However, an adversarial agent (`Eve') is also observing the channel, and wants to reconstruct the message --- Alice and Bob are penalized based on Eve's reconstruction, and thus Alice must encode her message using a randomly generated \textit{key}, known only to Alice and Bob. This is similar to the cryptography environment considered in \cite{abadi2016learning}.


\begin{figure}
\begin{subfigure}{.52\textwidth}
  \centering
  \includegraphics[width=1\linewidth]{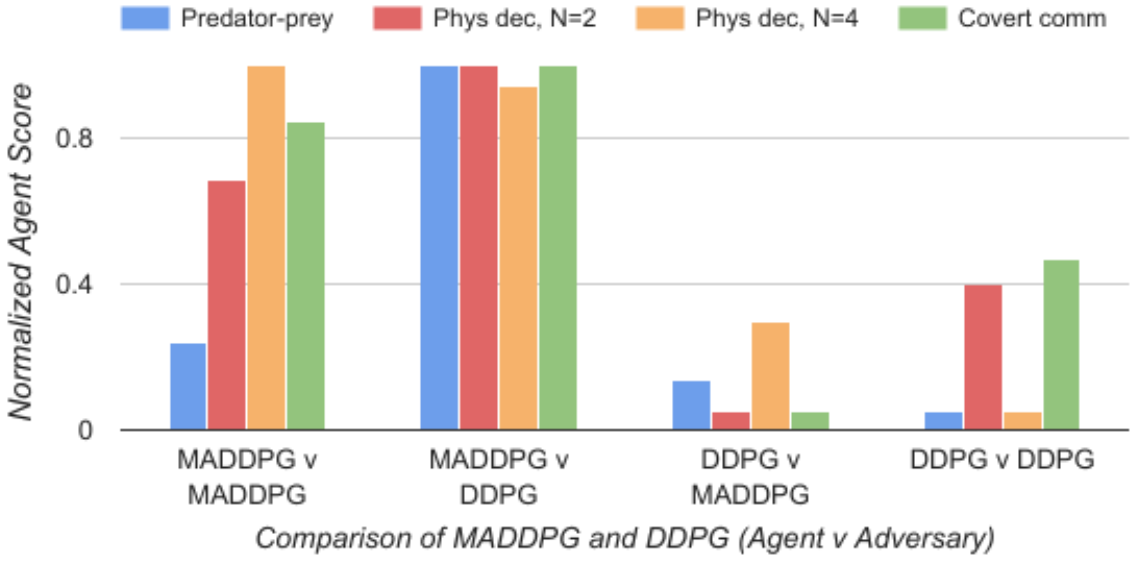}
\end{subfigure}
\hspace{1mm}
\begin{subfigure}{.48\textwidth}
  \centering
  \includegraphics[width=1\linewidth]{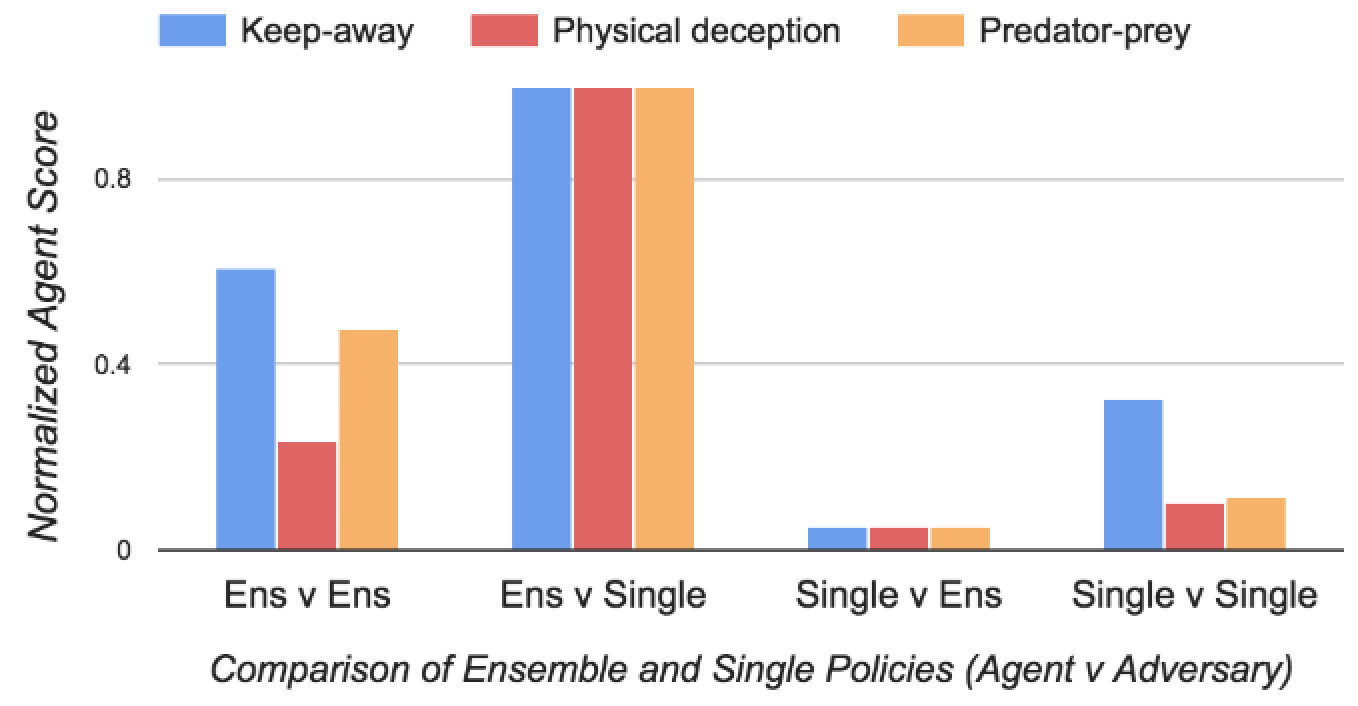}
\end{subfigure}
\vspace{-2mm}
\caption{Comparison between MADDPG and DDPG (left), and between single policy MADDPG and ensemble MADDPG (right) on the competitive environments. Each bar cluster shows the 0-1 normalized score for a set of competing policies (agent v adversary), where a higher score is better for the agent. In all cases, MADDPG outperforms DDPG when directly pitted against it, and similarly for the ensemble against the single MADDPG policies. Full results are given in the Appendix. \vspace{-3mm}}\label{fig:adv}
\end{figure}

\subsection{Comparison to Decentralized Reinforcement Learning Methods}
\label{sec:maddpgexperiments}

\begin{wrapfigure}{r}{0.5\textwidth}
\vspace{-6mm}
\includegraphics[width=1\linewidth]{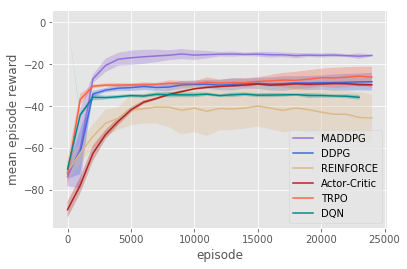}

\caption{\label{fig:comm_curves} Agent reward on cooperative communication after 25000 episodes. \vspace{-3mm}}
\end{wrapfigure}
We implement our MADDPG algorithm and evaluate it on the environments presented in Section \ref{sec:environments}. Unless otherwise specified, our policies are parameterized by a two-layer ReLU MLP with 64 units per layer. 
The messages sent between agents are soft approximations to discrete messages, calculated using the Gumbel-Softmax estimator \cite{jang2016categorical}. 
To evaluate the quality of policies learned in competitive settings, we pitch MADDPG agents against DDPG agents, and compare the resulting success of the agents and adversaries in the environment. We train our models until convergence, and then evaluate them by averaging various metrics for 1000 further iterations. 
We provide the tables and details of our results on all environments in the Appendix, and summarize them here. \hide{ (including hyperparameters)}

We first examine the cooperative communication scenario. Despite the simplicity of the task (the speaker only needs to learn to output its observation), traditional RL methods such as DQN, Actor-Critic, a first-order implementation of TRPO, and DDPG all fail to learn the correct behaviour (measured by whether the listener is within a short distance from the target landmark). In practice we observed that the listener learns to ignore the speaker and simply moves to the middle of all observed landmarks. We plot the learning curves over 25000 episodes for various approaches in Figure \ref{fig:comm_curves}.

\begin{figure}
\begin{subfigure}{.50\textwidth}
  \centering
  \rotatebox{90}{\scriptsize{Cooperative Comm.}}
  \includegraphics[width=0.31\linewidth]{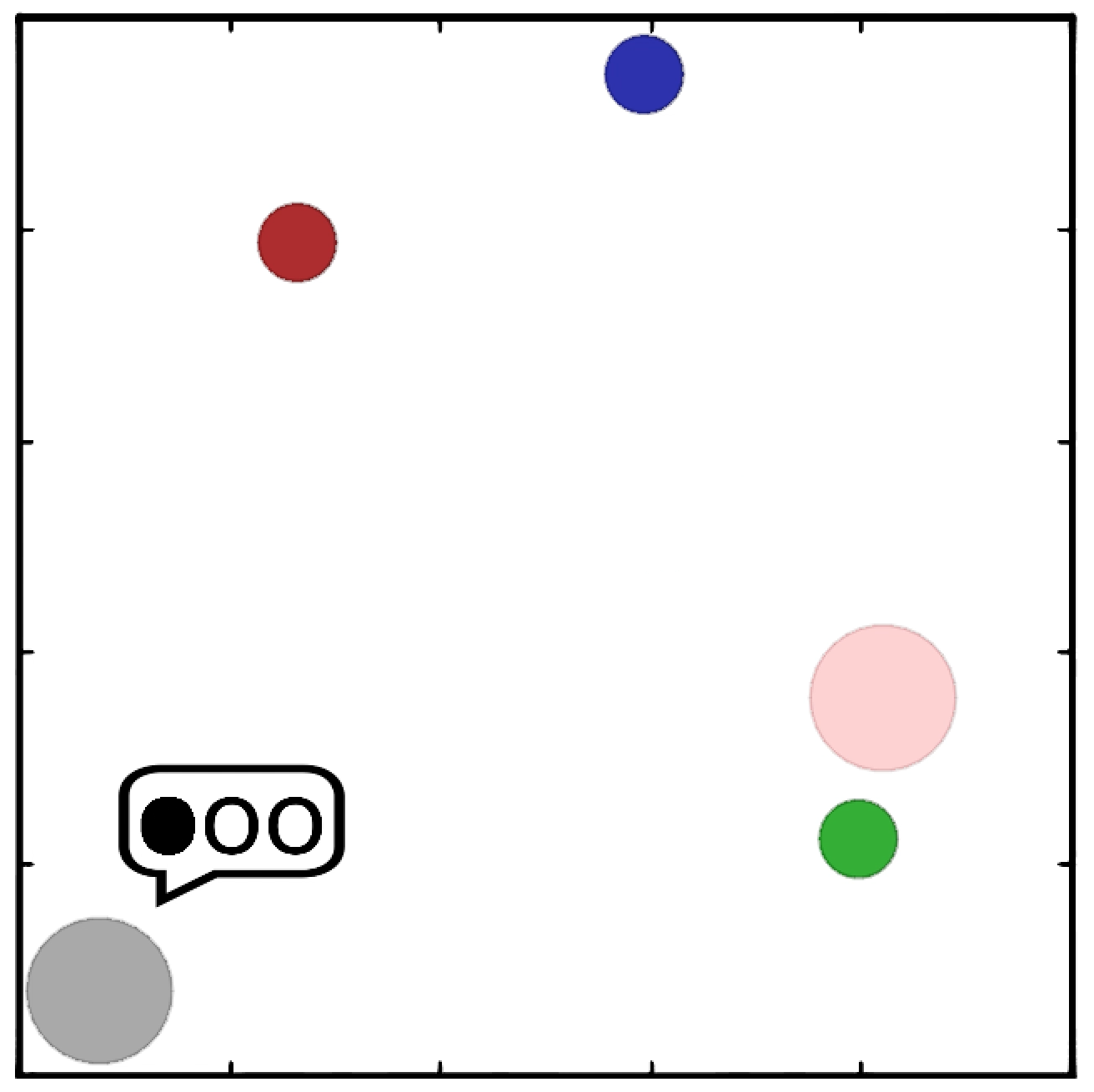}
  \includegraphics[width=0.31\linewidth]{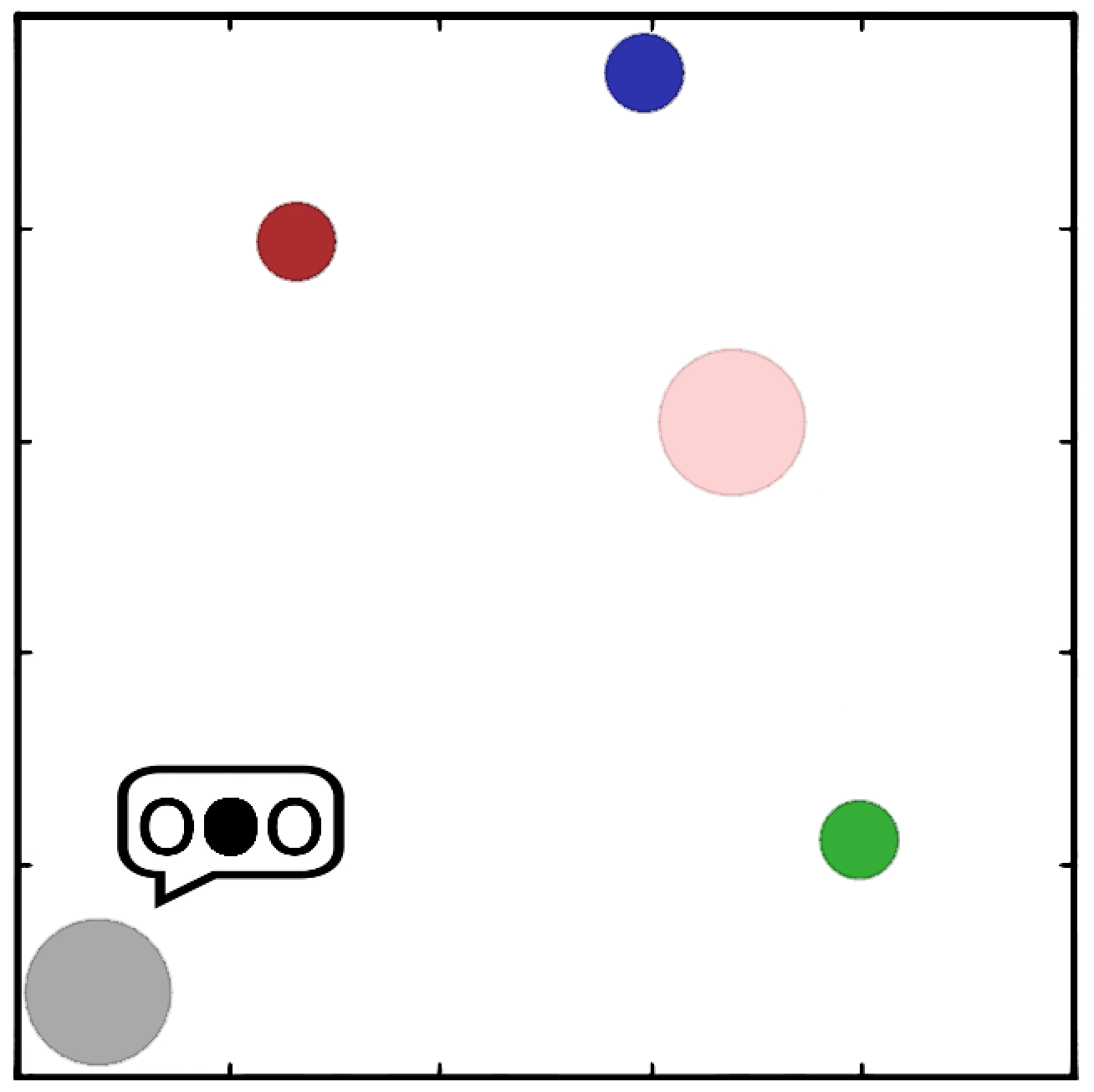}
  \includegraphics[width=0.31\linewidth]{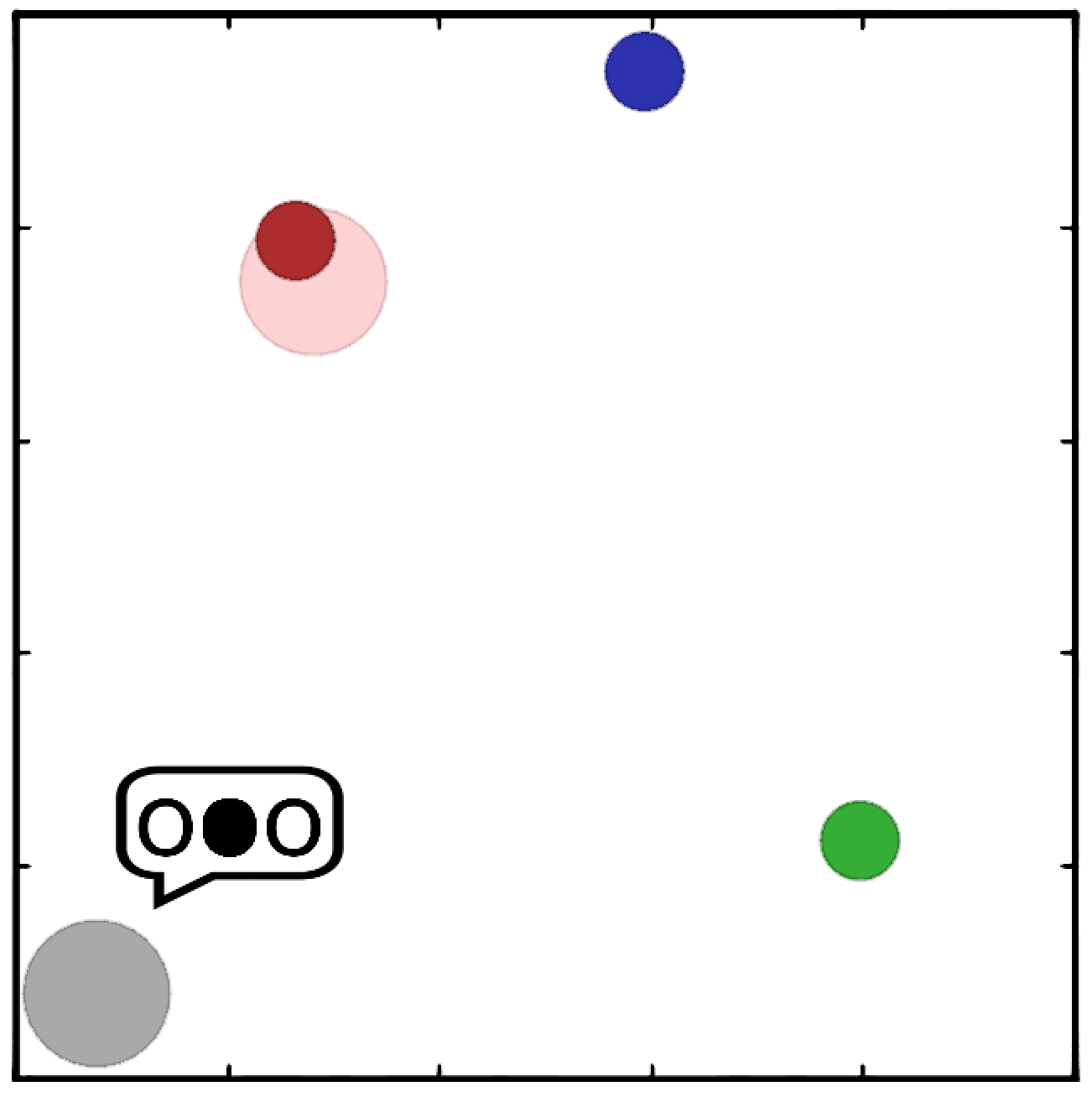}
  \rotatebox{90}{\scriptsize{Physical Deception}}
  \includegraphics[width=0.31\linewidth]{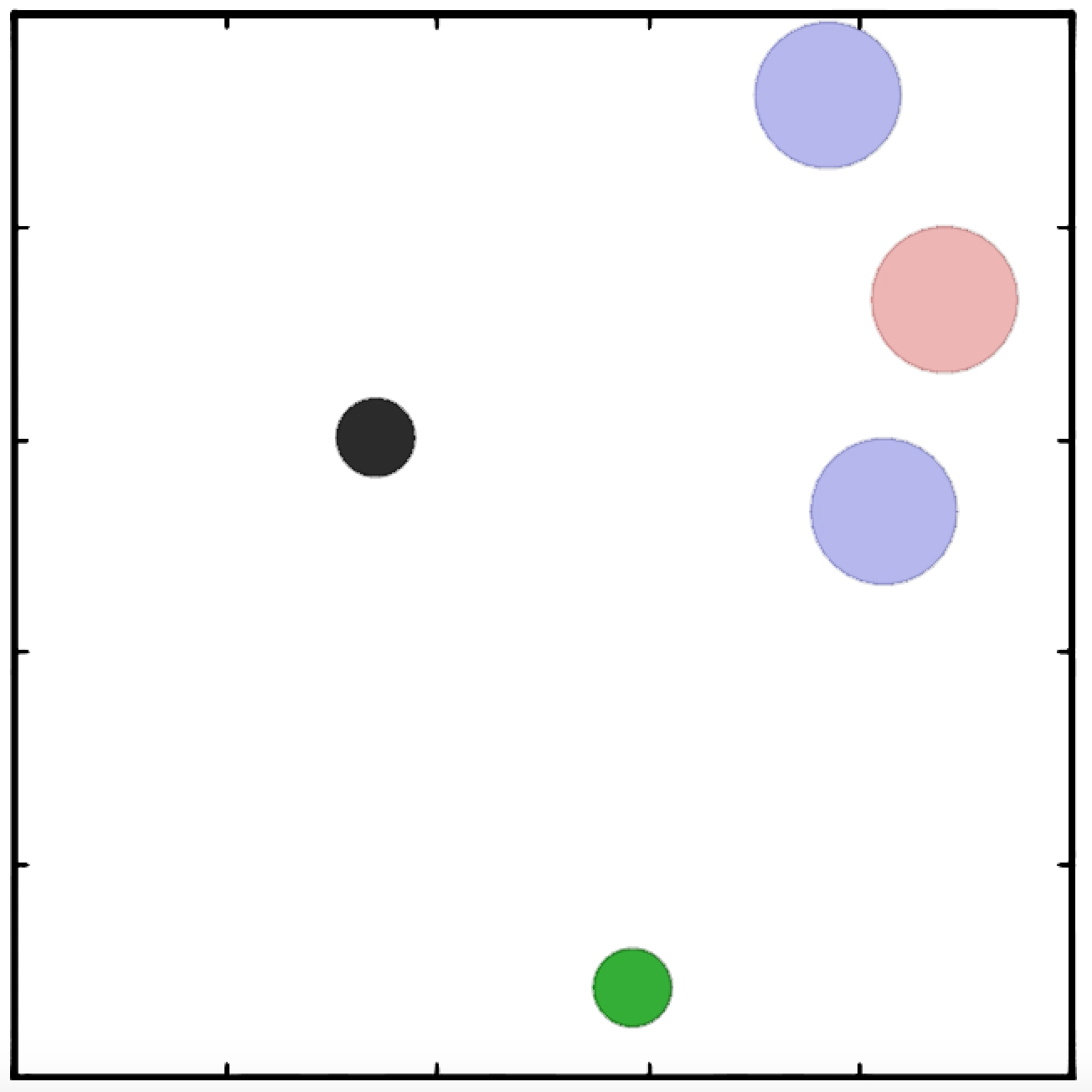}
  \includegraphics[width=0.31\linewidth]{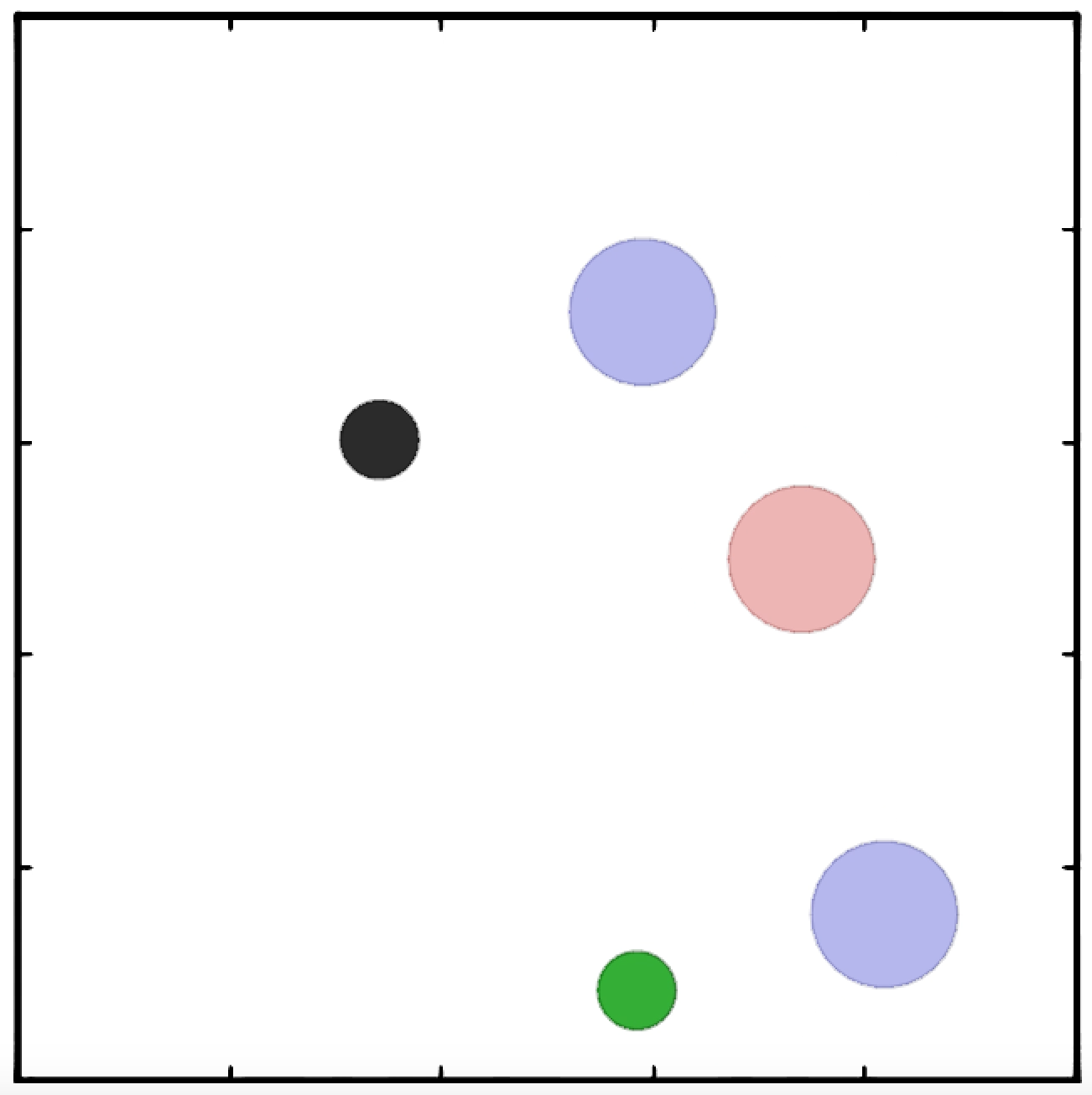}
  \includegraphics[width=0.31\linewidth]{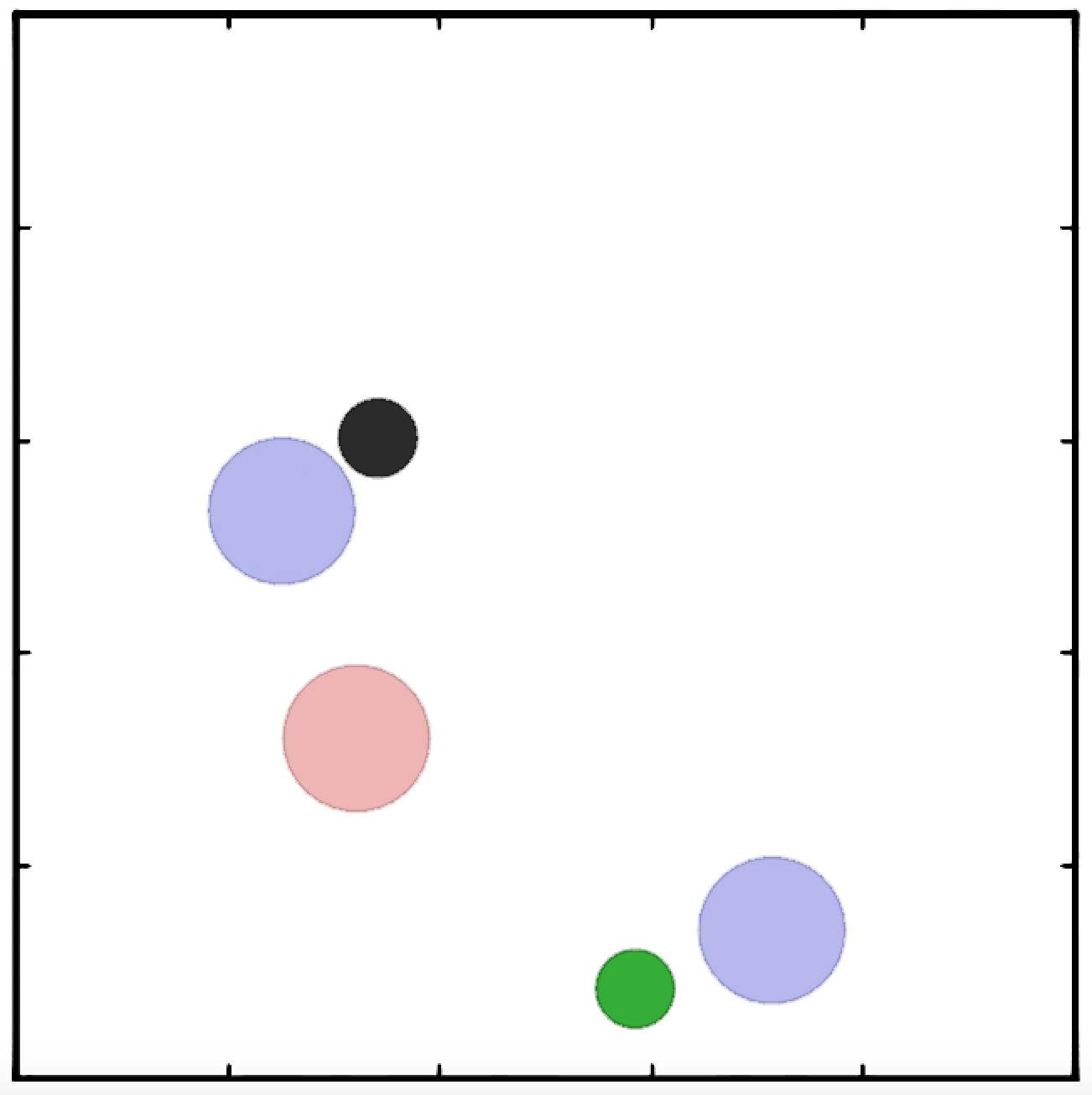}
  \caption{MADDPG}
\end{subfigure}
\hspace{1mm}
\begin{subfigure}{.50\textwidth}
  \centering
  \includegraphics[width=0.31\linewidth]{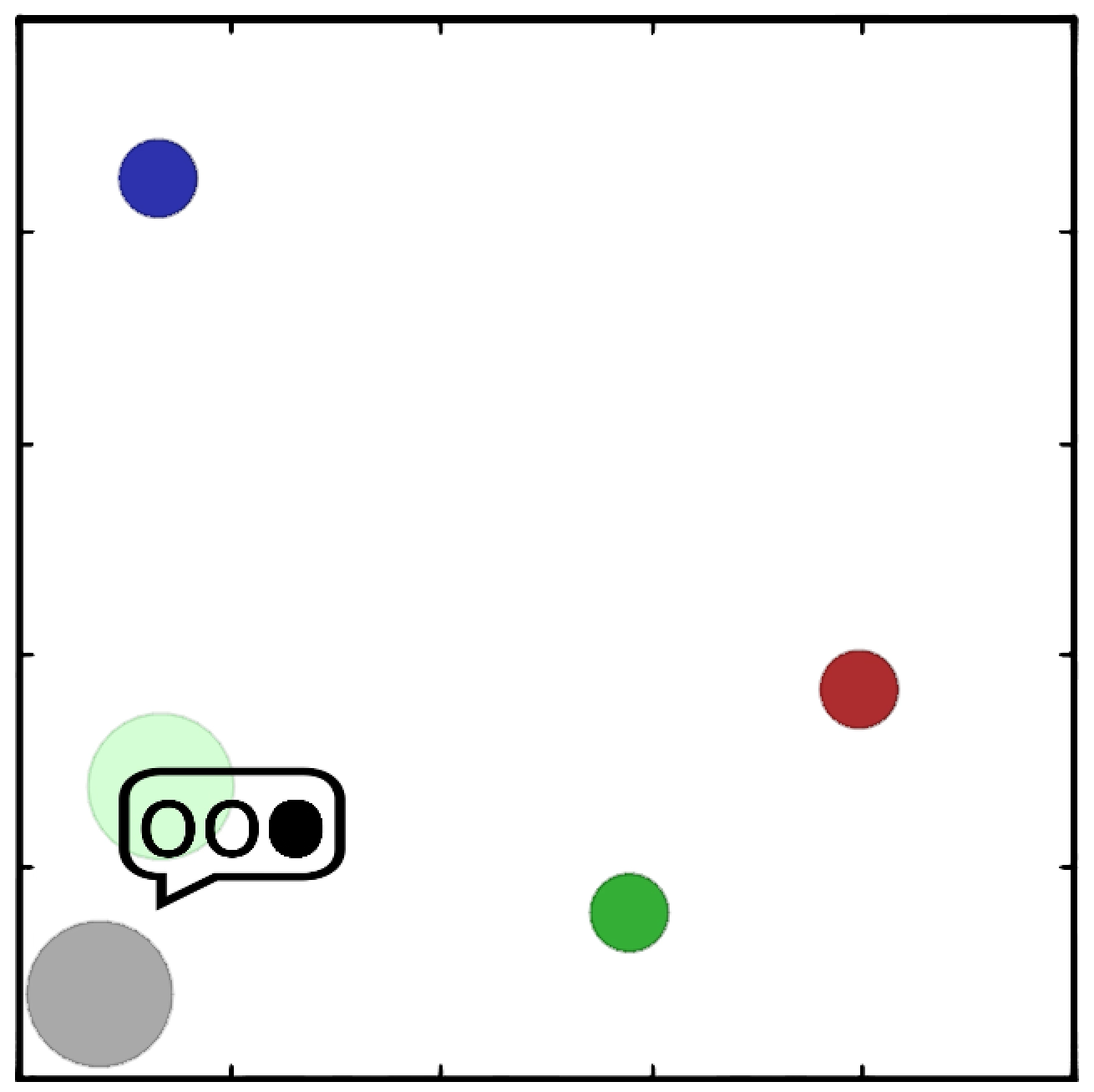}
  \includegraphics[width=0.31\linewidth]{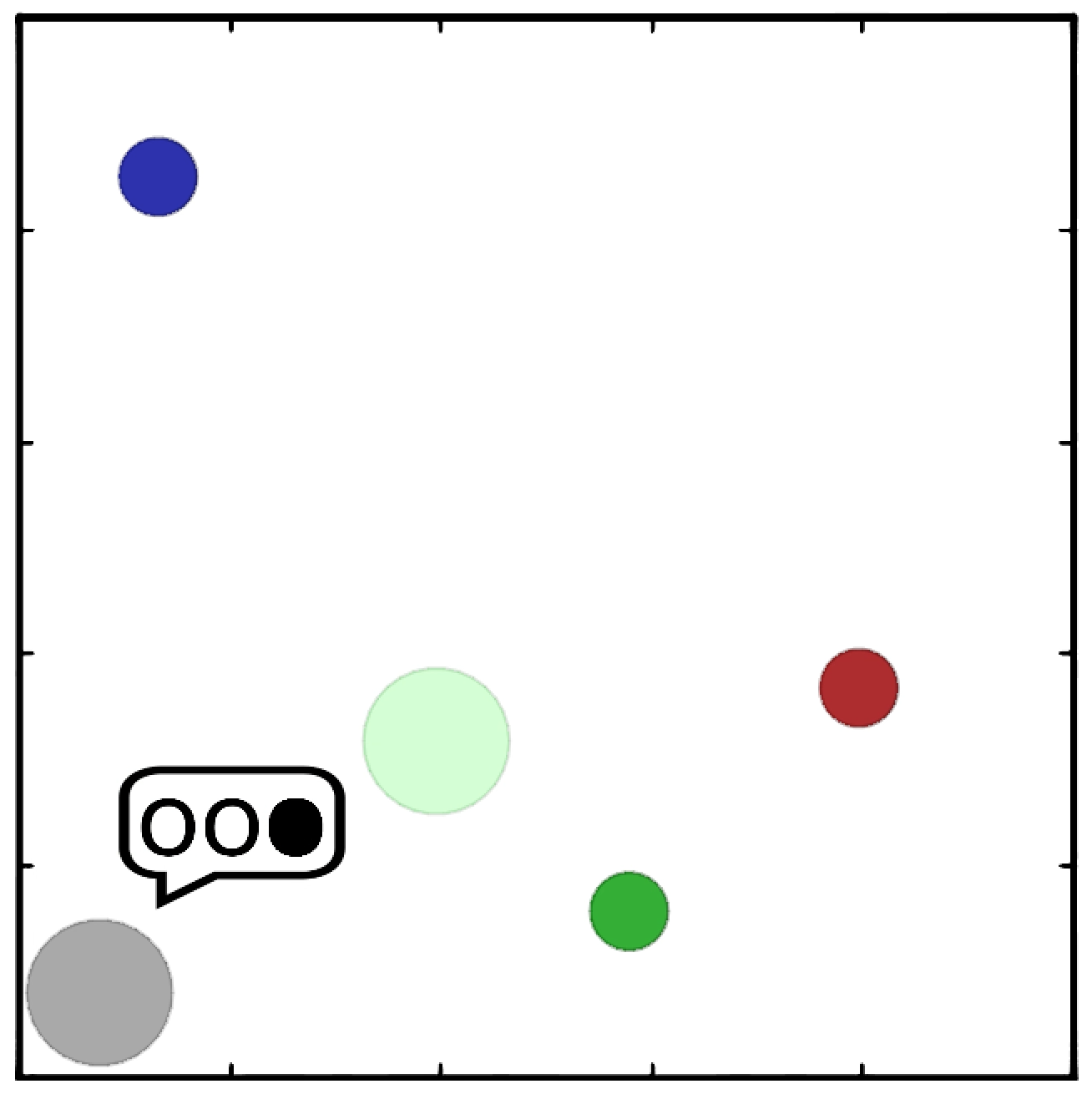}
  \includegraphics[width=0.31\linewidth]{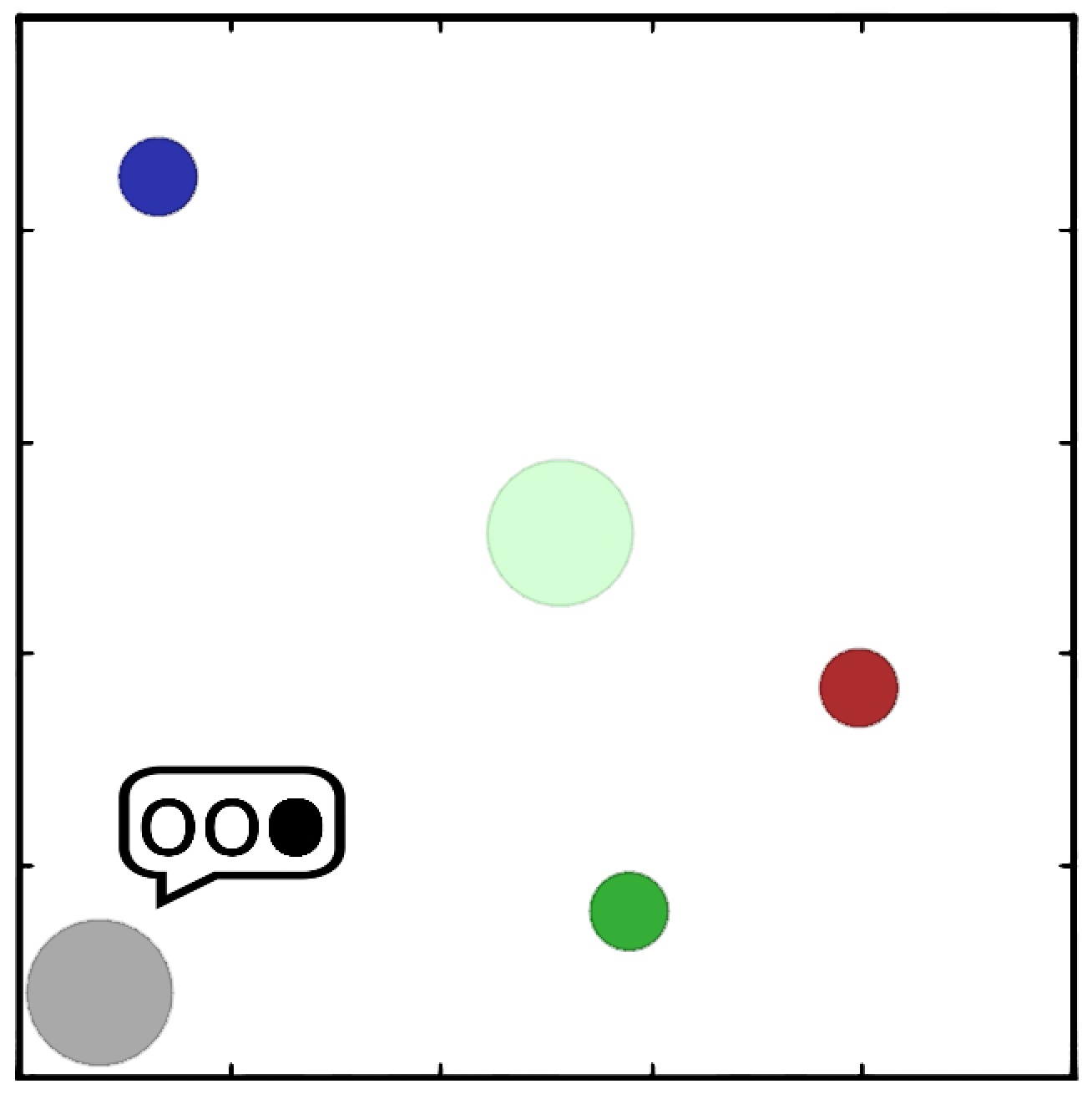}
  \includegraphics[width=0.31\linewidth]{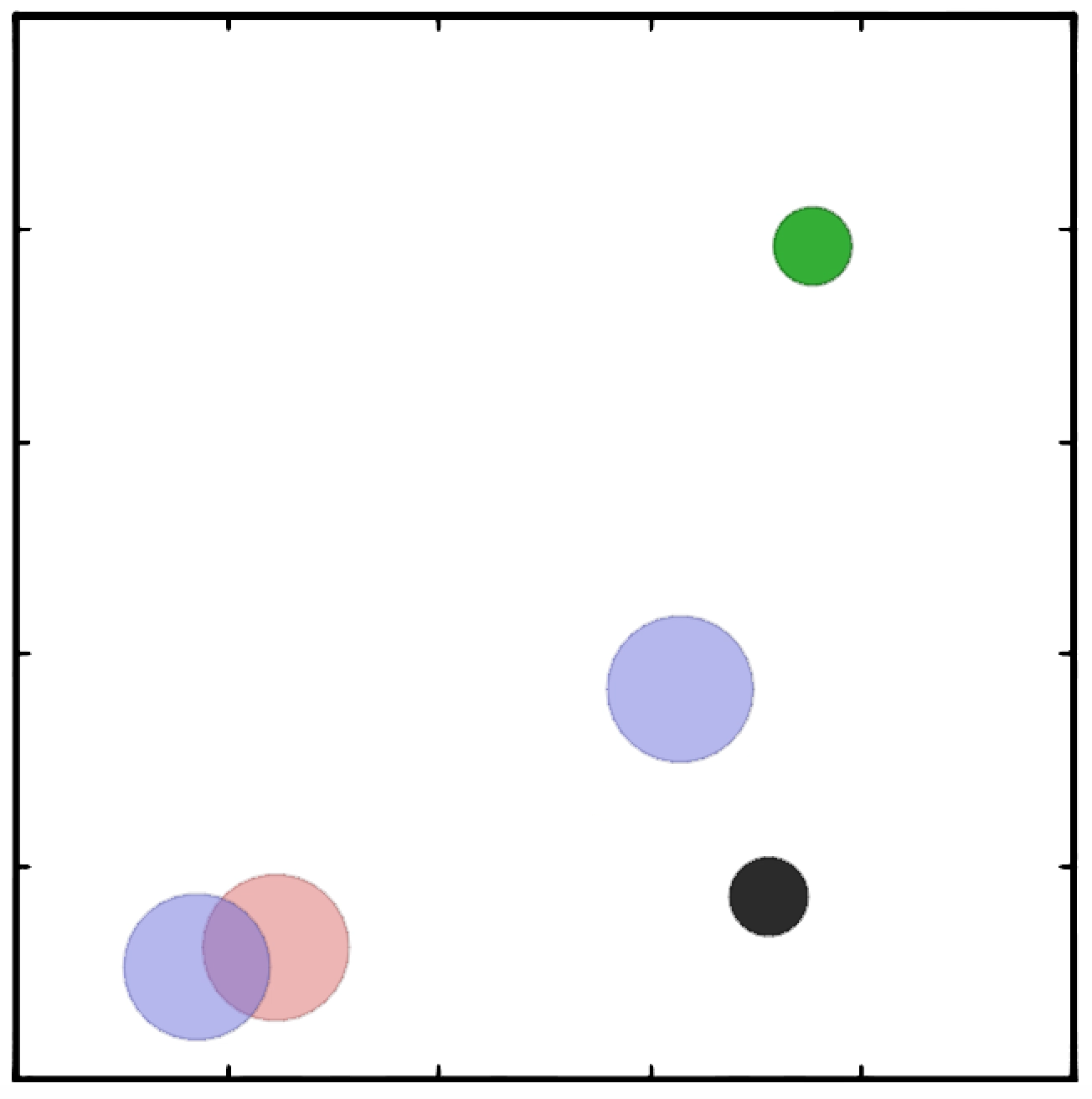}
  \includegraphics[width=0.31\linewidth]{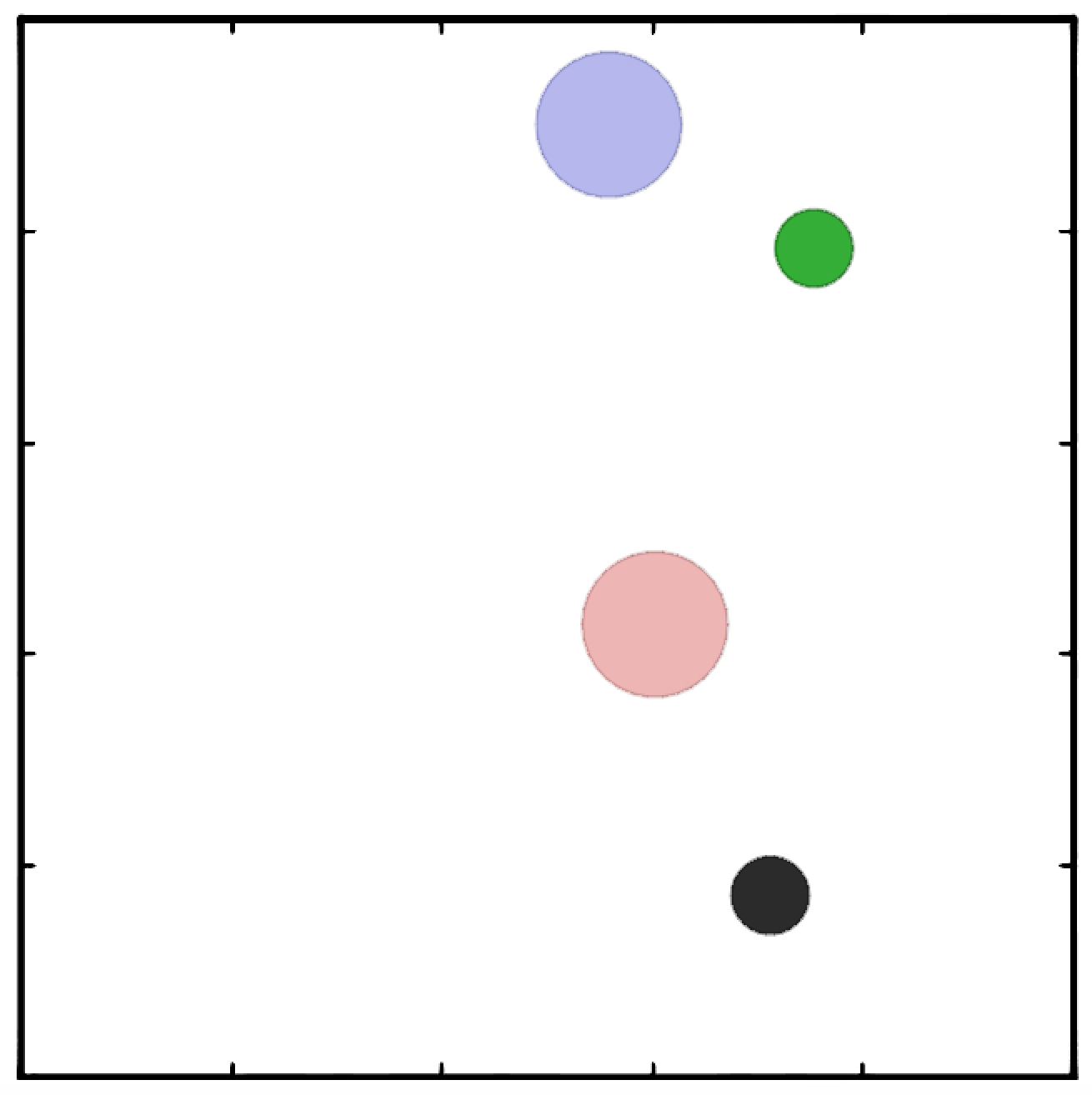}
  \includegraphics[width=0.31\linewidth]{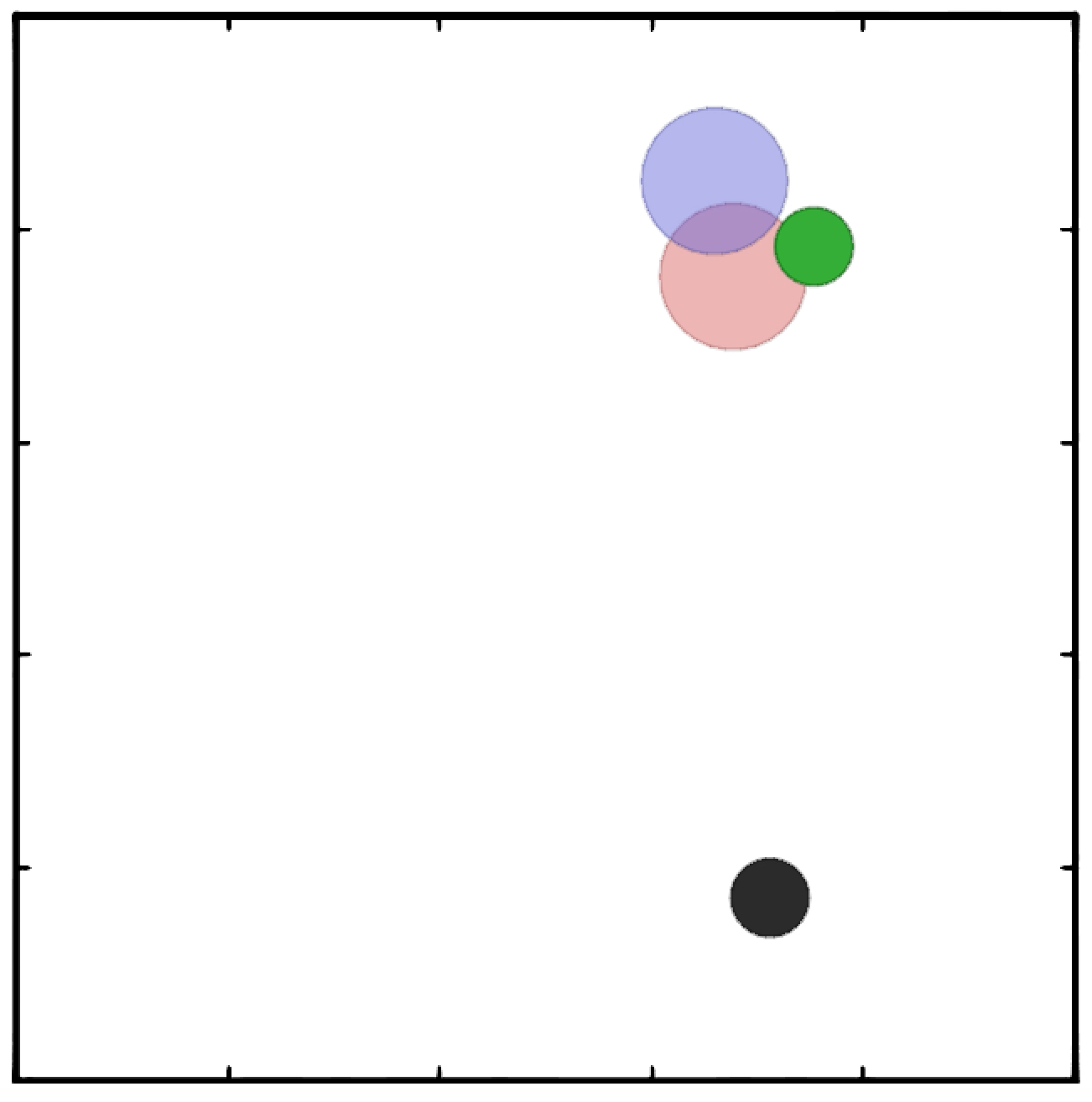}
  \caption{DDPG}
\end{subfigure}
\vspace{-2mm}
\caption{Comparison between MADDPG (left) and DDPG (right) on the cooperative communication (CC) and physical deception (PD) environments at $t=0$, $5$, and $25$. Small dark circles indicate landmarks. In CC, the grey agent is the speaker, and the color of the listener indicates the target landmark. In PD, the blue agents are trying to deceive the red adversary, while covering the target landmark (in green). MADDPG learns the correct behavior in both cases: in CC the speaker learns to output the target landmark color to direct the listener, while in PD the agents learn to cover both landmarks to confuse the adversary. DDPG (and other RL algorithms) struggles in these settings: in CC the speaker always repeats the same utterance and the listener moves to the middle of the landmarks, and in PP one agent greedily pursues the green landmark (and is followed by the adversary) while the othe agent scatters. See video for full trajectories. \vspace{-4mm}}\label{fig:disp}
\end{figure}

We hypothesize that a primary reason for the failure of traditional RL methods in this (and other) multi-agent settings is the lack of a consistent gradient signal. 
For example, if the speaker utters the correct symbol while the listener moves in the wrong direction, the speaker is penalized. This problem is exacerbated as the number of time steps grows: we observed that traditional policy gradient methods can learn when the objective of the listener is simply to reconstruct the observation of the speaker in a single time step, or if the initial positions of agents and landmarks are fixed and evenly distributed. This indicates that many of the multi-agent methods previously proposed for scenarios with short time horizons (e.g.\@ \cite{lazaridou2016multi}) may not generalize to more complex tasks.

\begin{wrapfigure}{l}{0.5\textwidth}
\centering
\includegraphics[width=\linewidth]{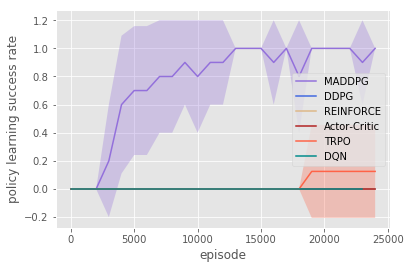}
\caption{\label{fig:comm_succ} Policy learning success rate on cooperative communication after 25000 episodes. \vspace{-4mm}}
\end{wrapfigure}
Conversely, MADDPG agents can learn coordinated behaviour more easily via the centralized critic. In the cooperative communication environment, MADDPG is able to reliably learn the correct listener and speaker policies, and the listener is often (84.0\% of the time) able to navigate to the target.

A similar situation arises for the physical deception task: when the cooperating agents are trained with MADDPG, they are able to successfully deceive the adversary by covering all of the landmarks around 94\% of the time when $L=2$ (Figure 5). Furthermore, the adversary success is quite low, especially when the adversary is trained with DDPG (16.4\% when $L=2$). This contrasts sharply with the behaviour learned by the cooperating DDPG agents, who are unable to deceive MADDPG adversaries in any scenario, and do not even deceive other DDPG agents when $L=4$.

While the cooperative navigation and predator-prey tasks have a less stark divide between success and failure, in both cases the MADDPG agents outperform the DDPG agents. In cooperative navigation, MADDPG agents have a slightly smaller average distance to each landmark, but have almost half the average number of collisions per episode (when $N=2$) compared to DDPG agents due to the ease of coordination. Similarly, MADDPG predators are far more successful at chasing DDPG prey (16.1 collisions/episode) than the converse (10.3 collisions/episode).

In the covert communication environment, we found that Bob trained with both MADDPG and DDPG out-performs Eve in terms of reconstructing Alice's message. However, Bob trained with MADDPG achieves a larger relative success rate compared with DDPG (52.4\% to 25.1\%). Further, only Alice trained with MADDPG can encode her message such that Eve achieves near-random reconstruction accuracy. 
The learning curve (a sample plot is shown in Appendix) shows that the oscillation due to the competitive nature of the environment often cannot be overcome with common decentralized RL methods. We emphasize that we do not use any of the tricks required for the cryptography environment from \cite{abadi2016learning}, including modifying Eve's loss function, alternating agent and adversary training, and using a hybrid `mix \& transform' feed-forward and convolutional architecture.


\subsection{Effect of Learning Polices of Other Agents}

We evaluate the effectiveness of learning the policies of other agents in the cooperative communication environment, following the same hyperparameters as the previous experiments and setting $\lambda=0.001$ in Eq.~\ref{eq:loss-modellearn}. The results are shown in Figure~\ref{fig:approx}. 
We observe that despite not fitting the policies of other agents perfectly (in particular, the approximate listener policy learned by the speaker has a fairly large KL divergence to the true policy), learning with approximated policies is able to achieve the same success rate as using the true policy, without a significant slowdown in convergence.

\begin{figure}
\begin{subfigure}{.5\textwidth}
  \centering
  \includegraphics[width=.8\linewidth]{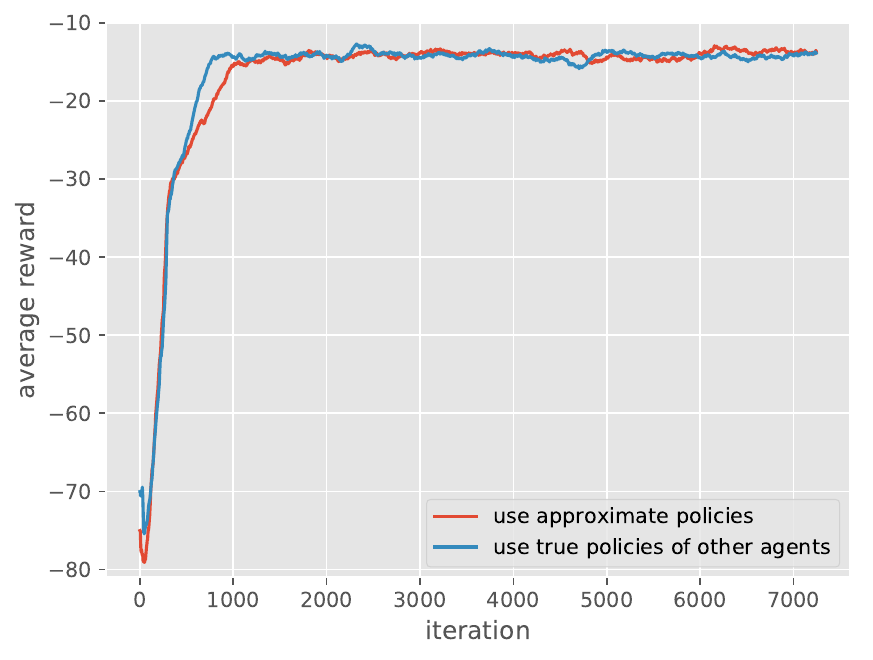}
  \label{fig:approx_reward}
\end{subfigure}
\begin{subfigure}{.5\textwidth}
  \centering
  \includegraphics[width=.8\linewidth]{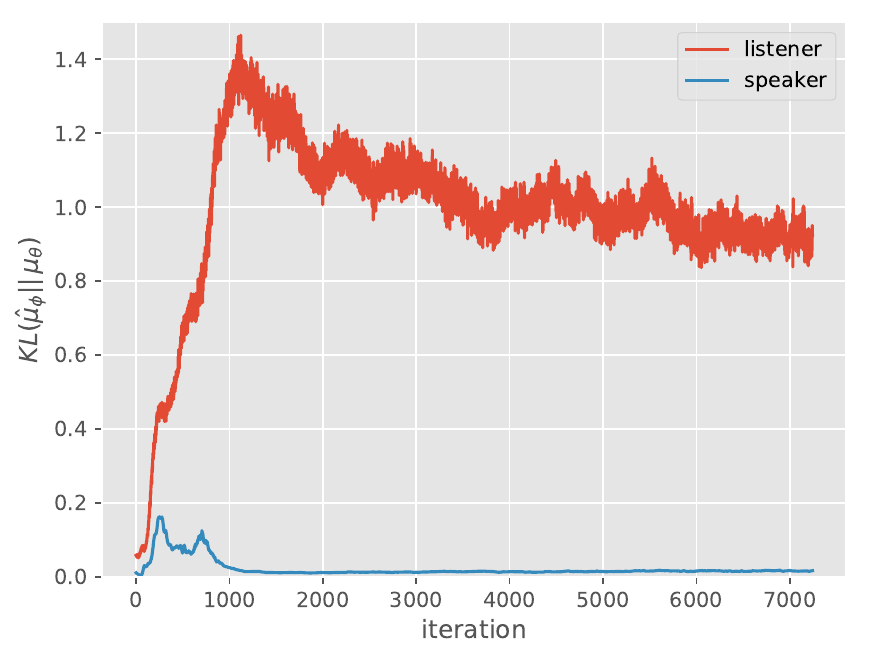}
  \label{fig:approx_kl}
\end{subfigure}
\caption{Effectiveness of learning by approximating policies of other agents in the cooperative communication scenario. \textit{Left:} plot of the reward over number of iterations; MADDPG agents quickly learn to solve the task when approximating the policies of others. \textit{Right:} KL divergence between the approximate policies and the true policies. \vspace{-3mm}}\label{fig:approx}
\end{figure}

\subsection{Effect of Training with Policy Ensembles}
We focus on the effectiveness of policy ensembles in competitive environments, including keep-away, cooperative navigation, and predator-prey. We choose $K=3$ sub-policies for the keep-away and cooperative navigation environments, and $K=2$ for predator-prey. To improve convergence speed, we enforce that the cooperative agents should have the same policies at each episode, and similarly for the adversaries.
To evaluate the approach, we measure the performance of ensemble policies and single policies in the roles of both agent and adversary. 
The results are shown on the right side of Figure~\ref{fig:adv}. We observe that agents with policy ensembles are stronger than those with a single policy. In particular, when pitting ensemble agents against single policy adversaries (second to left bar cluster), the ensemble agents outperform the adversaries by a large margin compared to when the roles are reversed (third to left bar cluster).

\section{Conclusions and Future Work}

We have proposed a multi-agent policy gradient algorithm where agents learn a centralized critic based on the observations and actions of all agents. Empirically, our method outperforms traditional RL algorithms on a variety of cooperative and competitive multi-agent environments. We can further improve the performance of our method by training agents with an ensemble of policies, an approach we believe to be generally applicable to any multi-agent algorithm.

One downside to our approach is that the input space of $Q$ grows linearly (depending on what information is contained in $\mathbf{x}$) with the number of agents $N$. This could be remedied in practice by, for example, having a modular Q function that only considers agents in a certain neighborhood of a given agent. We leave this investigation to future work. 



\subsubsection*{Acknowledgements}

The authors would like to thank Jacob Andreas, Smitha Milli, Jack Clark, Jakob Foerster, and others at OpenAI and UC Berkeley for interesting discussions related to this paper, as well as Jakub Pachocki, Yura Burda, and Joelle Pineau for comments on the paper draft. We thank Tambet Matiisen for providing the code base that was used for some early experiments associated with this paper. Ryan Lowe is supported in part by a Vanier CGS Scholarship and the Samsung Advanced Institute of Technology. Finally, we'd like to thank OpenAI for fostering an engaging and productive research environment.

\bibliographystyle{abbrv}
\bibliography{nips_2017}


\newpage

\section*{Appendix}
\label{sec:appendix}

\subsection*{Multi-Agent Deep Deterministic Policy Gradient Algorithm}
For completeness, we provide the MADDPG algorithm below.

\begin{algorithm}[H]
 \SetAlgoLined
  \begin{algorithmic}
    \FOR{$\textrm{episode}=1\textrm{ to }M$}
      \STATE Initialize a random process $\mathcal{N}$ for action exploration
      \STATE Receive initial state $\mathbf{x}$
      \FOR{$t=1\textrm{ to max-episode-length}$}
          \STATE for each agent $i$, select action $a_i=\cpol_{\theta_i}(o_i)+\mathcal{N}_t$ w.r.t. the current policy and exploration
          \STATE Execute actions $a=(a_1,\ldots,a_N)$ and observe reward $r$ and new state $\mathbf{x}'$
          \STATE Store $(\mathbf{x},a,r,\mathbf{x}')$ in replay buffer $\mathcal{D}$
          \STATE $\mathbf{x}\gets\mathbf{x}'$
          \FOR{agent $i=1\textrm{ to }N$}
            \STATE Sample a random minibatch of $S$ samples $(\mathbf{x}^j,a^j,r^j,\mathbf{x}'^j)$ from $\mathcal{D}$
            \STATE Set $y^j=r_i^j+\gamma\, Q_i^{\cpol'}(\mathbf{x}'^j,a_1',\ldots,a_N')|_{a_k'=\cpol'_k(o_k^j)}$
            \STATE Update critic by minimizing the loss $\mathcal{L}(\theta_i)=\frac{1}{S}\sum_j\left(y^j-Q_i^{\cpol}(\mathbf{x}^j,a^j_1,\ldots,a^j_N)\right)^2$
            \STATE Update actor using the sampled policy gradient:
            $$
            \nabla_{\theta_i}J\approx\frac{1}{S}\sum_j\nabla_{\theta_i}\cpol_i(o_i^j)\nabla_{a_i}Q_i^{\cpol}(\mathbf{x}^j,a^j_1,\ldots,a_i,\ldots,a^j_N)\big|_{a_i=\cpol_i(o_i^j)}
            $$
          \ENDFOR     
          \STATE Update target network parameters for each agent $i$:
          $$
          \theta_i'\gets\tau\theta_i+(1-\tau)\theta_i'
          $$
      \ENDFOR
    \ENDFOR
  \end{algorithmic}
 \caption{Multi-Agent Deep Deterministic Policy Gradient for $N$ agents}
\end{algorithm}

\subsection*{Experimental Results}
\label{sec:tables}

In all of our experiments, we use the Adam optimizer with a learning rate of 0.01 and $\tau=0.01$ for updating the target networks. $\gamma$ is set to be 0.95. The size of the replay buffer is $10^6$ and we update the network parameters after every 100 samples added to the replay buffer. We use a batch size of 1024 episodes before making an update, except for TRPO where we found a batch size of 50 lead to better performance (allowing it more updates relative to MADDPG). We train with 10 random seeds for environments with stark success/ fail conditions (cooperative communication, physical deception, and covert communication) and 3 random seeds for the other environments.

The details of the experimental results are shown in the following tables.

\begin{table*}[ht!]
\small
\centering
\begin{tabular}{l c c}
\toprule
Agent $\pol$& \textbf{Target reach \% }&    \textbf{Average distance}   \\ \hline
MADDPG &  \textbf{84.0\%} & \textbf{0.133}  \\
DDPG &  32.0\% & 0.456 \\
DQN & 24.8\% & 0.754 \\
Actor-Critic & 17.2\% & 2.071 \\
TRPO & 20.6\% & 1.573 \\
REINFORCE &  13.6\%  & 3.333 \\
\bottomrule
\end{tabular}
\caption{\label{tab:simple_speaker_listener} Percentage of episodes where the agent reached the target landmark and average distance from the target in the cooperative communication environment, after 25000 episodes. Note that the percentage of targets reached is different than the policy learning success rate in Figure \ref{fig:comm_succ}, which indicates the percentage of runs in which the correct policy was learned (consistently reaching the target landmark). Even when the correct behavior is learned, agents occasionally hover slightly outside the target landmark on some episodes, and conversely agents who learn to go to the middle of the landmarks occasionally stumble upon the correct landmark.}
\end{table*}

\begin{table*}[ht!]
\small
\centering
\begin{tabular}{l c c c c}
\toprule
& \multicolumn{2}{c}{ $N = 3$} & \multicolumn{2}{c}{ $N = 6$} \\
Agent $\pol$& \textbf{Average dist.}&    \textbf{\# collisions} & \textbf{Average dist.}&    \textbf{\# collisions}   \\ \hline
MADDPG &  \textbf{1.767} & \textbf{0.209} & \textbf{3.345} & \textbf{1.366}  \\
DDPG &  1.858 & 0.375 & 3.350 & 1.585 \\
\bottomrule
\end{tabular}
\caption{\label{tab:simple_spread} Average \# of collisions per episode and average agent distance from a landmark in the cooperative navigation task, using 2-layer 128 unit MLP policies.}
\end{table*}

\begin{table*}[ht!]
\small
\centering
\begin{tabular}{l l c c}
\toprule
Agent $\pol$ & Adversary $\pol$ & \textbf{\# touches ({\sc pp1}) }&    \textbf{\# touches ({\sc pp2})} \\ \hline
MADDPG & MADDPG & 11.0 & 0.202  \\
MADDPG & DDPG &  \textbf{16.1} & \textbf{0.405} \\
DDPG & MADDPG & 10.3  & 0.298  \\
DDPG & DDPG & 9.4 & 0.321  \\
\bottomrule
\end{tabular}
\caption{\label{tab:simple_tag} Average number of prey touches by predator per episode on two predator-prey environments with $N=L=3$, one where the prey (adversaries) are slightly (30\%) faster ({\sc pp1}), and one where they are significantly (100\%) faster ({\sc pp2}). All policies in this experiment are 2-layer 128 unit MLPs. }
\end{table*}

\begin{table*}[ht!]
\fontsize{8.5}{9}\selectfont
\centering
\begin{tabular}{l l c c c c c c}
\toprule
 & & \multicolumn{3}{c}{ $N = 2$} & \multicolumn{3}{c}{ $N=4$} \\
Agent $\pol$ & Adversary $\pol$ & \textbf{AG succ \% }&    \textbf{ADV succ \%}& \textbf{$\Delta$ succ \%}& \textbf{AG succ \% }&    \textbf{ADV succ \%}& \textbf{$\Delta$ succ \%} \\ \hline
MADDPG & MADDPG & 94.4\% & 39.2\% & 55.2\% & 81.5\% & 28.3\% & \textbf{53.2\%} \\
MADDPG & DDPG &  92.2\% & 16.4\% & \textbf{75.8\%} & 69.6\% & 19.8\% & 49.4\% \\
DDPG & MADDPG & 68.9\%  & 59.0\% & 9.9\% & 35.7\% & 32.1\% & 3.6\% \\
DDPG & DDPG & 74.7\%  & 38.6\% & 36.1\% & 18.4\% & 35.8\% & -17.4\% \\
\bottomrule
\end{tabular}
\caption{\label{tab:simple_adversary} Results on the physical deception task, with $N=2$ and $4$ cooperative agents/landmarks. Success (\textit{succ \%}) for agents (AG) and adversaries (ADV) is if they are within a small distance from the target landmark. }
\end{table*}

\begin{table*}[ht!]
\small
\centering
\begin{tabular}{l l c c c}
\toprule
Alice, Bob $\pol$& Eve $\pol$& \textbf{Bob succ \% }&    \textbf{Eve succ \%} & \textbf{$\Delta$ succ \%}  \\ \hline
MADDPG & MADDPG & 96.5\% & 52.1\% & 44.4\% \\
MADDPG & DDPG & 96.8\% & 44.4\% & \textbf{52.4\%} \\
DDPG & MADDPG & 65.3\% & 64.3\% & 1.0\% \\
DDPG & DDPG &  92.7\% & 67.6\% & 25.1\%\\
\bottomrule
\end{tabular}
\caption{\label{tab:simple_crypto} Agent (Bob) and adversary (Eve) success rate (\textit{succ\@ \%}, i.e.\@ correctly reconstructing the speaker's message) in the covert communication environment. The input message is drawn from a set of two 4-dimensional one-hot vectors.}
\end{table*}

\begin{table}[ht!]
\centering
{\small
\begin{subtable}{.3\linewidth}
      \centering
        \begin{tabular}{c c c}
        \toprule
            & \emph{S}. AG. &  \emph{E}. AG.\\
            \hline
            \emph{S}. Adv.&7.94&7.74\\
            \emph{E}. Adv.&8.35&8.11\\
            \bottomrule
        \end{tabular}
        \caption{KA: average frames that the adversary occupies the goal. For Adv., the larger the better.}
\end{subtable}
\hspace{2mm}
\begin{subtable}{.3\linewidth}
      \centering
        \begin{tabular}{c c c}
        \toprule
            & \emph{S}. AG. & \emph{E}. AG.\\
            \hline
            \emph{S}. Adv.&4.25&4.10\\
            \emph{E}. Adv.&5.55&4.44\\
        \bottomrule
        \end{tabular}
        \caption{PD: average frames that the adversary stays at the goal. For Adv., the larger the better.}
\end{subtable}
\hspace{2mm}
\begin{subtable}{.3\linewidth}
      \centering
        \begin{tabular}{c c c}
        \toprule
            & \emph{S}. AG. & \emph{E}. AG.\\
            \hline
            \emph{S}. Adv.&0.201&0.211\\
            \emph{E}. Adv. &0.125&0.17\\
            \bottomrule
        \end{tabular}
        \caption{PP: average number of collisions. For Adv., the smaller the better.}
\end{subtable}
}
\caption{Evaluations of the adversary agent w./w.o. policy ensembles over 1000 trials on different scenarios including (a) keep-away (KA) with $N=M=1$, (b) physical deception (PD) with $N=2$ and (c) predator-prey (PP) with $N=4$ and $L=1$. \emph{S.} denotes agents with a single policy. \emph{E.} denotes agents with policy ensembles.}\label{tab:ensemble}
\end{table}

\begin{figure}
\centering
\includegraphics[width=.65\linewidth]{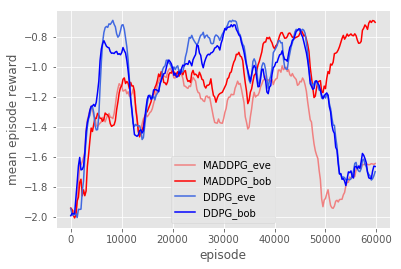}
\caption{\label{fig:crypto} In competitive environments such as `covert communication', the reward can oscillate significantly as agents adapt to each other. DDPG is often unable to overcome this, whereas our MADDPG algorithm has much greater success. \vspace{-3mm}}
\end{figure}

\newpage

\subsection*{Variance of Policy Gradient Algorithms in a Simple Multi-Agent Setting}
\label{sec:variance}
To analyze the variance of policy gradient methods in multi-agent settings, we consider a simple cooperative scenario with $N$ agents and binary actions: $P(a_i=1) = \theta_i$. We define the reward to be $1$ if all actions are the same $a_1=a_2=\ldots=a_N$, and $0$ otherwise. This is a simple scenario with no temporal component: agents must simply learn to either always output $1$ or always output $0$ at each time step. Despite this, we can show that the probability of taking a gradient step in the correct direction decreases exponentially with the number of agents $N$.

\paragraph{Proposition 1.}
\textit{
Consider $N$ agents with binary actions: $P(a_i=1) = \theta_i$, where $R(a_1,\dots,a_N) = \mathbf{1}_{a_1=\dots=a_N}$. We assume an uninformed scenario, in which agents are initialized to $\theta_i=0.5 \ \forall i$. Then, if we are estimating the gradient of the cost $J$ with policy gradient, we have:
$$P(\langle \hat{\nabla} J, \nabla J \rangle > 0) \propto (0.5)^N,$$
where $\hat{\nabla} J$ is the policy gradient estimator from a single sample, and $\nabla J$ is the true gradient.
}

\begin{proof}
We can write $P(a_i) = {\theta_i}^{a_i}  (1-\theta_i)^{1-a_i}$, and $\log P(a_i) = a_i \log {\theta_i} + (1-a_i)\log(1-\theta_i)$.

The policy gradient estimator (from a single sample) is:
\begin{equation}
\begin{split}
\hat{\dd} J &= R(a_1,\dots,a_N) \dd \log P(a_1,\dots,a_N) \\ 
& = R(a_1,\dots,a_N) \dd \sum_i a_i \log {\theta_i} + (1-a_i)\log(1-\theta_i) \\
& = R(a_1,\dots,a_N) \dd (a_i \log {\theta_i} + (1-a_i)\log(1-\theta_i)) \\
& = R(a_1,\dots,a_N) \left(\frac{a_i}{\theta_i} - \frac{1-a_i}{1-\theta_i} \right)
\end{split}    
\end{equation}
For $\theta_i = 0.5$ we have: 
$$
\hat{\dd} J = R(a_1,\dots,a_N) \left(2 a_i- 1\right)
$$
And the expected reward can be calculated as:
$$
\mathbb E (R) = \sum_{a_1,\dots,a_N} R(a_1,\dots,a_N) (0.5)^N
$$

Consider the case where $R(a_1,\dots,a_N) = \mathbf{1}_{a_1=\dots=a_N=1}$. Then 
$$
\mathbb E (R) = (0.5)^N
$$
and 
$$
\mathbb E (\hat{\dd} J ) = \dd J = (0.5)^N
$$
The variance of a single sample of the gradient is then:
$$
\mathbb V (\hat{\dd} J ) = \mathbb E (\hat{\dd} J^2 ) - \mathbb E (\hat{\dd} J ) ^2 = (0.5)^N - (0.5)^{2N}
$$
What is the probability of taking a step in the right direction? We can look at $P(\langle \hat{\nabla} J, \nabla J \rangle > 0)$. We have:
$$
\langle \hat{\nabla} J, \nabla J \rangle = \sum_i \hat{\dd} J  \times (0.5)^N = (0.5)^N \sum_i \hat{\dd} J,
$$
so $P(\langle \hat{\nabla} J, \nabla J \rangle > 0) = (0.5)^N$. Thus, as the number of agents increases, the probability of taking a gradient step in the right direction decreases exponentially.
\end{proof}

While this is a somewhat artificial example, it serves to illustrate that there are simple environments that become progressively more difficult  (in terms of the probability of taking a gradient step in a direction that increases reward) for policy gradient methods as the number of agents grows. This is particularly true in environments with sparse rewards, such as the one described above. Note that in this example, the policy gradient variance $\mathbb V (\hat{\dd} J )$ actually decreases as N grows. However, the expectation of the policy gradient decreases as well, and the signal to noise ratio $\mathbb E (\hat{\dd} J ) / (\mathbb V (\hat{\dd} J ))^{1/2}$ decreases with $N$, corresponding to the decreasing probability of a correct gradient direction. 
The intuitive reason a centralized critic helps reduce the variance of the gradients is that we remove a source of uncertainty; conditioned only on the agent's own actions, there is significant variability associated with the actions of other agents, which is largely removed when using these actions as input to the critic.







\end{document}